\newcommand{\set}[1]{\{ #1 \}}
\newcommand{\aset}[1]{\langle #1 \rangle}
\newcommand{\eg}{e.g.,\xspace}
\newcommand{\figref}[1]{Fig.~\ref{#1}}
\newcommand{\tabref}[1]{Table~\ref{#1}}
\newcommand{\secref}[1]{Section~\ref{#1}}
\newcommand{\ignore}[1]{}
\newcommand{\DNN}{\mathcal{N}}
\newcommand{\planet}{{\sl Planet}\xspace}
\newcommand{\reluplex}{{\sl Reluplex}\xspace}
\newcommand{\eran}{{\sl ERAN}\xspace}
\newcommand{\neurify}{{\sl Neurify}\xspace}
\newcommand{\mipverify}{{\sl MIPVerify}\xspace}
\newcommand{\marabou}{{\sl Marabou}\xspace}
\newcommand{\nnenum}{{\sl nnenum}\xspace}
\newcommand{\bab}{{\sl BaB}\xspace}
\newcommand{\babsb}{{\sl BaBSB}\xspace}
\newcommand{\deepzono}{{\sl DeepZono}\xspace}
\newcommand{\deeppoly}{{\sl DeepPoly}\xspace}
\newcommand{\refinezono}{{\sl RefineZono}\xspace}
\newcommand{\refinepoly}{{\sl RefinePoly}\xspace}
\newcommand{\verinet}{{\sl VeriNet}\xspace}
\renewcommand*{\checkmark}[1][]{\tikz[x=1em, y=1em]\fill[#1] (0,.35) -- (.25,0) -- (1,.7) -- (.25,.15) -- cycle;}
\newcommand{\filledcirc}[1]{%
	\begin{tikzpicture}
		\draw (0,0) circle (1ex);
		\ifthenelse{\equal{#1}{left}}{\fill (0,1ex) arc (90:270:1ex) -- (0,0) -- cycle;}{
			\ifthenelse{\equal{#1}{right}}{\fill (0,-1ex) arc (90:270:-1ex) -- (0,0) -- cycle;}{
				\ifthenelse{\equal{#1}{rightgray}}{\fill[gray] (0,-1ex) arc (90:270:-1ex) -- (0,0) -- cycle;}{
					\ifthenelse{\equal{#1}{top}}{\fill (1ex,0) arc (0:180:1ex) -- (0,0) -- cycle;}{
						\ifthenelse{\equal{#1}{bottom}}{\fill (-1ex,0) arc (0:180:-1ex) -- (0,0) -- cycle;}{
							\ifthenelse{\equal{#1}{full}}{\fill (1ex,0) arc (0:360:1ex) -- (0,0) -- cycle;}{
								\ifthenelse{\equal{#1}{fullgray}}{
									\fill (0,1ex) arc (90:270:1ex) -- (0,0) -- cycle;
									\fill[gray] (0,-1ex) arc (90:270:-1ex) -- (0,0) -- cycle;
									}{
									\ifthenelse{\equal{#1}{empty}}{}{
										\ifthenelse{\equal{#1}{}}{}{
											\errmessage{Unknown option: #1}}}}}}}}}}
									\end{tikzpicture}%
									}
\newcommand{\approach}{\textsc{DNNV}\xspace}
\newcommand{\tool}{\textsc{DNNV}\xspace}
\newcommand{\onnx}{\textsc{ONNX}\xspace}
\newcommand{\dsl}{\textsc{DNNP}\xspace}
\begin{document}

\title{DNNV: A Framework for Deep Neural Network Verification}

\author{David Shriver\orcidID{0000-0003-0208-6517} \and
	Sebastian Elbaum\orcidID{0000-0001-9592-1352} \and
	Matthew B. Dwyer\orcidID{0000-0002-1937-1544}}

\authorrunning{D. Shriver et al.}

\institute{University of Virginia, Charlottesville, VA, USA
	\email{\{dls2fc,selbaum,matthewbdwyer\}@virginia.edu}}

\maketitle

\begin{abstract}
	Despite the large number of sophisticated deep neural network (DNN) verification algorithms, DNN verifier developers, users, and researchers still face several challenges.
	First, verifier developers must contend with the rapidly changing DNN field to support new DNN operations and property types.
	Second, verifier users have the burden of selecting a verifier input format to specify their problem.
	Due to the many input formats, this decision can greatly restrict the verifiers that a user may run.
	Finally, researchers face difficulties in re-using benchmarks to evaluate and compare verifiers, due to the large number of input formats required to run different verifiers.
	Existing benchmarks are rarely in formats supported by verifiers other than the one for which the benchmark was introduced.
	In this work we present DNNV, a framework for reducing the burden on DNN verifier researchers, developers, and users. \tool standardizes input and output formats, includes a simple yet expressive DSL for specifying DNN properties, and provides powerful simplification and reduction operations to  facilitate the application, development, and comparison of DNN verifiers.
	We show how DNNV  increases the support of verifiers for existing benchmarks from 30\% to 74\%.
	\begin{keywords}
		deep neural networks
		\and
		formal verification
		\and
		tool
	\end{keywords}
\end{abstract}

\section{Introduction}
\label{sec:introduction}

Deep neural networks (DNN) are being applied increasingly in complex domains including
safety critical systems such as autonomous driving~\cite{codevilla-etal:ICRA:2018:driving,bojarski-etal:NIPS-DLS:2016:end-to-end-driving}.
For such applications, it is often necessary to obtain behavioral
guarantees about the safety of the system.
To address this need, researchers have been  exploring algorithms
for verifying that the behavior of a trained DNN meets some correctness
property.
In the past few years, more than 20 DNN verification algorithms have
been introduced~\cite{bastani-etal:NIPS:2016,boopathy-etal:AAAI:2019:cnncert,bunel-etal:NIPS:2018:plnn,dutta:NFM:2018:sherlock,dvijotham-etal:UAI:2018,ehlers:ATVA:2017:planet,gehr-etal:SP:2018:ai2,huang-etal:CAV:2017:dlv,raghunathan-etal:ICLR:2018,ruan-etal:IJCAI:2018:deepgo,singh-etal:NeurIPS:2019:refinepoly,singh-etal:NIPS:2018:deepzono,singh-etal:POPL:2019:deeppoly,singh-etal:ICLR:2019:refinezono,tjeng-etal:ICLR:2019:mipverify,wang-etal:NIPS:2018:neurify,wang-etal:USS:2018:reluval,weng-etal:ICML:2018,wong-kolter:ICML:2018,xiang-etal:NNLS:2018,zhang-etal:NeurIPS:2018:crown}, and this number continues to grow.
Unfortunately, this progress is hindered by several challenges.

First, DNN verifier developers must contend with a rapidly changing field that 
continually incorporates new DNN operations and property types. 
While supporting more properties and operations may increase the applicable scope of verifiers to real-world problems, it also increases a verifier's complexity.
For example, for a verifier such as \deeppoly, supporting additional operations requires non-trivial effort to define and prove correctness of new abstract transformers. 
For verifiers such as \reluplex or \neurify, supporting new property types requires implementing a mapping from those properties onto internal verifier structures.

Second, DNN verifier users carry the burden of re-writing property specifications  and transforming their models to match a chosen verifier's supported format. 
That burden is compounded by the diversity of input formats required by each verifier,  as illustrated in \tabref{tab:verifiers}. 
There is little overlap between input formats for verifiers (only 
\deepzono and \deeppoly or \bab and \babsb which are algorithmically similar), and even when using the same format (as in the case of the popular \onnx format) we find that the underlying operations supported are  different.
This makes it difficult and costly to run multiple verifiers on a given problem since the user must understand the requirements of each verifier and translate inputs to their formats.
While two new formats, VNNLIB~\cite{VNNLIB} and SOCRATES~\cite{SOCRATES}, have been introduced in an attempt to standardize DNN verifier input formats, their expressiveness is currently limited and they can require writing new conversion tools for networks, as we discuss at the end of \secref{subsec:inputformat:others}.

\begin{table}[t]
	\scriptsize
	\centering
	\caption{The network and property formats supported by each verifier. A * indicates that only a subset of the full input format specification is supported.}
	\label{tab:verifiers}
	\begin{tabular}{llll}
		\toprule
		\textbf{Verifier}                                     & \textbf{Network Format}       & \textbf{Property Format} & \textbf{Algorithmic Approach} \\
		\midrule
		\reluplex~\cite{katz-etal:CAV:2017:reluplex}          & \reluplex-NNET                & hard-coded               & Search                        \\
		\planet~\cite{ehlers:ATVA:2017:planet}                & RLV                           & RLV                      & Search                        \\
		\bab~\cite{bunel-etal:NIPS:2018:plnn}                 & RLV                           & RLV                      & Search                        \\
		\babsb~\cite{bunel-etal:NIPS:2018:plnn}               & RLV                           & RLV                      & Search                        \\
		\mipverify~\cite{tjeng-etal:ICLR:2019:mipverify}      & MIPVerify Julia API           & MIPVerify Julia API      & Optimization                  \\
		\neurify~\cite{wang-etal:NIPS:2018:neurify}           & \neurify-NNET                 & hard-coded               & Search-Optimization           \\
		\deepzono~\cite{singh-etal:NIPS:2018:deepzono}        & ONNX*, \eran-PYT, \eran-TF & \eran Python API         & Reachability                  \\
		\deeppoly~\cite{singh-etal:POPL:2019:deeppoly}        & ONNX*, \eran-PYT, \eran-TF & \eran Python API         & Reachability                  \\
		\refinezono~\cite{singh-etal:ICLR:2019:refinezono}    & ONNX*, \eran-PYT, \eran-TF & \eran Python API         & Reachability                  \\
		\refinepoly~\cite{singh-etal:NeurIPS:2019:refinepoly} & ONNX*, \eran-PYT, \eran-TF & \eran Python API         & Reachability                  \\
		\marabou~\cite{katz-etal:cav:2019:marabou}            & \reluplex-NNET or ONNX*       & \marabou Python API      & Search                        \\
		\nnenum~\cite{bak-etal:cav:2020:nnenum}               & ONNX*                         & \nnenum Python API       & Search-Reachability           \\
		\verinet~\cite{henriksen-lomuscio:ECAI:2020:verinet}  & ONNX* or \neurify-NNET        & \verinet Python API      & Search-Optimization           \\
		\bottomrule
	\end{tabular}
\end{table}

Finally, DNN verifier researchers face challenges in re-using benchmarks to evaluate and compare verifiers.
Most benchmarks exist in the format of the verifier for which they were introduced, and running other verifiers on that benchmark requires writing custom tooling to translate the benchmark to other formats, or writing new input parsers for verifiers to support the given benchmark format.
For example, the ACAS Xu benchmark (described in \secref{sec:study}), was originally specified with networks in \reluplex-NNET format, and properties hard-coded into the verifier.
The benchmark was converted, for example, into RLV format for \bab and \babsb, as well as into \onnx with hard-coded properties for \refinezono.
Other benchmarks, such as the DAVE benchmark used by \neurify, has networks specified in \neurify-NNET, and properties hard-coded into the verifier. Due to its format, this potentially great benchmark has not been used by other verifiers.

\emph{
	We introduce a framework, \tool, to reduce the burden on
	verifier researchers, developers, and users.
}
\tool helps to create and run more re-usable verification benchmarks by standardizing a network and property format, and
it increases the applicability of a verifier to richer properties and real-world benchmarks by performing property reductions and simplifying DNN structures.

As shown in Fig. \ref{fig:diagram},
\tool takes as input a network in the common  
\onnx input format, a property written in an expressive domain-specific language DNNP, and the name of a target verifier.
Using the framework and plugins for the target verifier,
\tool transforms the problem by simplifying the network and reducing the property
to enable the application of verifiers that otherwise would be unable to run.
\tool then translates the network and property to the input format of the desired verifier, runs that verifier on the transformed problem, and returns the results in a standardized format.

\begin{figure}[t]
	\centering
	\includegraphics[width=\linewidth]{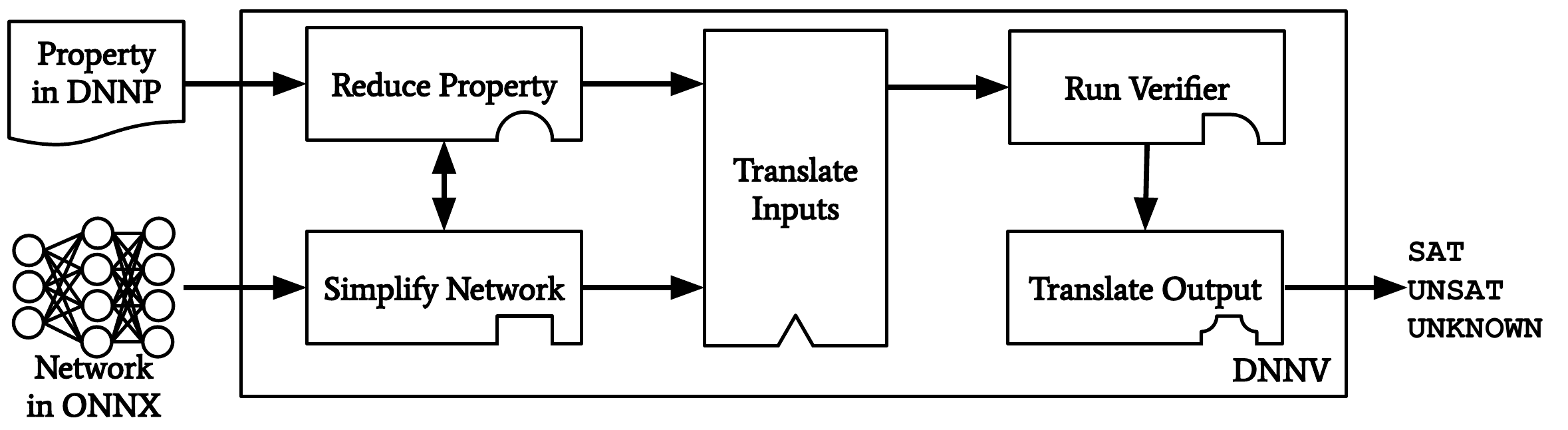}
	\caption{
		\tool Architecture
	}
	\label{fig:diagram}
\end{figure}

The primary contributions of this work are:
(1) the \tool framework  to reduce the burden on DNN verifier researchers, developers, and users; \tool includes a simple yet expressive DSL for specifying DNN properties, and powerful simplification and reduction operations to increase verifiers' scope of applicability,
(2) an open source tool implementing \tool\footnote{\url{https://github.com/dlshriver/DNNV}}, with support for 13 verifiers, and extensive documentation, and
(3) an evaluation demonstrating the cost-effectiveness of \tool to increase the scope of applicability of verifiers.

\section{Background}
\label{sec:background}

A deep neural network $\DNN$ encodes an approximation of a target function $f: \mathbb{R}^{n} \rightarrow \mathbb{R}^{m}$.
A DNN can be represented as a directed graph $G_\DNN = \aset{V_\DNN, E_\DNN}$, where nodes, $v \in V_\DNN$, represent operations and edges, $e \in E_\DNN$, represent input arguments to operations.
A node without any incoming edges is an input to the DNN.
The output of a DNN can be computed by looping over nodes in topological order and computing the value of the node given its inputs.
The literature on machine learning has developed a broad range of
rich operation types and explored the benefits of different combinations of operations
in realizing accurate approximations of different target functions, e.g., \cite{Goodfellow-et-al-2016}.

Given a DNN, $\DNN: \mathbb{R}^{n} \rightarrow \mathbb{R}^{m}$,
a property, $\phi(\DNN)$, defines a set of constraints over the inputs, $\phi_{\mathcal{X}}$ -- the pre-condition, and a set of constraints over the 
outputs, $\phi_{\mathcal{Y}}$ -- the post-condition.
Verification of $\phi(\DNN)$ seeks to prove or falsify:
$\forall{x\in \mathbb{R}^{n}}: \phi_{\mathcal{X}}(x) \rightarrow \phi_{\mathcal{Y}}(\DNN(x))$.

A widely studied class of properties is \textit{robustness}, which originated with the study of adversarial examples~\cite{szegedy-etal:iclr:2014:lbfgs,yuan-etal:TNNLS:2019:aesurvey}.
These properties   specify that inputs from a specific region of the input space must all produce the same output class.
Detecting violations of robustness properties has been widely studied, and they are a
common type of property for evaluating verifiers~\cite{wang-etal:NIPS:2018:neurify,singh-etal:NIPS:2018:deepzono,singh-etal:POPL:2019:deeppoly,ehlers:ATVA:2017:planet,tjeng-etal:ICLR:2019:mipverify}.
Another common class of properties is \textit{reachability}, which define the post-condition using constraints over output values.
Reachability properties specify that inputs from a given region of the input space must produce outputs within a given region of the output space. 
Such properties have been used to evaluate several DNN verifiers~\cite{wang-etal:NIPS:2018:neurify,katz-etal:CAV:2017:reluplex,katz-etal:cav:2019:marabou}.

A recent survey on DNN verification~\cite{liu-etal:arxiv:2019:neuralverification} classifies these approaches based on their type: reachability, optimization, or search, or a combination of these.
Reachability-based methods compute a representation of the reachable set of outputs from an encoding of the set of inputs that satisfy the pre-condition.
The computed output set is often an over-approximation of the true reachable output region.
The precision of the computed output region depends on the symbolic representation used, \eg hyper-rectangles, zonotopes, polyhedra.
Reachability-based methods include~\cite{gehr-etal:SP:2018:ai2,ruan-etal:IJCAI:2018:deepgo,singh-etal:NeurIPS:2019:refinepoly,singh-etal:NIPS:2018:deepzono,singh-etal:POPL:2019:deeppoly,singh-etal:ICLR:2019:refinezono,xiang-etal:NNLS:2018}.
Optimization-based methods formulate property violations as a threshold for an objective function and use optimization algorithms to attempt to satisfy that threshold.
Optimization-based methods include~\cite{bastani-etal:NIPS:2016,dvijotham-etal:UAI:2018,raghunathan-etal:ICLR:2018,tjeng-etal:ICLR:2019:mipverify,wong-kolter:ICML:2018}.
Search-based methods explore regions of the input space where they then
formulate reachability or optimization sub-problems.
Search-based methods include~\cite{wang-etal:USS:2018:reluval,weng-etal:ICML:2018,huang-etal:CAV:2017:dlv,katz-etal:CAV:2017:reluplex,ehlers:ATVA:2017:planet,bunel-etal:NIPS:2018:plnn}.

\section{\tool Overview}
\label{sec:overview}

\tool remedies several key challenges faced by the DNN verification community.
A general overview of \tool is shown in \figref{fig:diagram}.
\tool takes in a property and network in a standard format, simplifies the network, reduces the property, translates the network and property to the input format of the verifier, runs the verifier, and translates its output.
Each of these components can be customized by verifier specific plugins.
We explain these components in more detail below.

\subsection{Input Formats}
\begin{wraptable}[8]{r}{0.45\textwidth}
	\vspace{-3.5em}
	\footnotesize
	\centering
	\caption{The number of ONNX operations supported by each verifier.}
	\label{tab:onnxsupport}
	\begin{tabular}{lr}
		\textbf{Verifier} & \textbf{\# ONNX Operations} \\
		\toprule
		DNNV              & 31                          \\
		ERAN              & 22                          \\
		nnenum            & 15                          \\
		marabou           & 12                          \\
		VeriNet           & 12                          \\
		\bottomrule
	\end{tabular}
\end{wraptable}
As shown in Table \ref{tab:verifiers}, existing verifiers do not support a consistent, common input format for networks and properties.
\tool standardizes the input and output formats to aid the community in creating and running verification benchmarks.

\textbf{ONNX}\label{subsec:inputformat:onnx} For specifying general deep neural network architectures,
we choose the open source DNN format ONNX  \cite{onnx}.
ONNX  can represent real-world networks, is supported by many common frameworks (e.g., PyTorch, MXNet) and conversion tools are available for other frameworks (e.g., TensorFlow, Keras).
Our current implementation supports a subset of the ONNX specification
that subsumes the subsets of ONNX implemented by the supported verifiers.
\tabref{tab:onnxsupport} shows the number of ONNX operations supported by each of the verifiers included in DNNV.
DNNV supports 40\% more operations than the verifier with the next highest support.
The ONNX subset supported by DNNV is sufficient for almost all existing verification benchmarks, as well as many real-world networks including VGG16 and ResNet34.

\begin{wrapfigure}[19]{r}{0.55\textwidth}
	\vspace{-3em}
	{
		\scriptsize
		\input{listings/robustness.py.tex}
	}
	\vspace{-1.5em}
	\caption{Example of a local robustness property specified with \dsl.}
	\label{lst:localrobustness}
\end{wrapfigure}
\textbf{\dsl} Due to the lack of a standard format for specifying DNN properties,
we develop a Python-embedded DSL for DNN properties, which we call \dsl.
\dsl is designed to express any property that can be verified by existing DNN verifiers in a form that is independent of the network.
DNNP is described in more detail in Appendix~A of the extended version of this paper~\cite{shriver-etal:arxiv:2021:dnnv}.

We demonstrate \dsl with an example of a local robustness property, shown in \figref{lst:localrobustness}.
The property specifies that, for all inputs, \texttt{x_} (Lines 14-23), in the input space (Line 18) and within a hyper-rectangle of radius \texttt{e} centered at the given input \texttt{x} (Line 19), the network should predict the same maximum class for both \texttt{x_} and \texttt{x} (Line 21).
For Fashion MNIST, this means that for all images within an $L_\infty$ distance of $e$ (specified on Line 12) from image 1 of the dataset (selected on Lines 10-11), the network should classify all of these images the same as it does for image 1.
We first import several Python packages that will be useful for specifying the property (Lines 1-3), including the dataset used to train the network, and a method for data manipulation.
Because \dsl allows importing arbitrary Python packages, it enables re-use of the same data loading and manipulation methods used to train a network.
After importing the necessary utilities, we define several variables that will be used in the final property expression (Lines 5-12).
Two of these variables, $i$ on Line 10 and $e$ on Line 12 are declared as parameters, which allows them to be specified on the command line at run time.
The value for $e$ must be provided at run time, since no default value is provided.
Finally, we define the semantics of the property specification, using methods provided by \dsl, as well as variables defined above (Lines 14-23).

\textbf{Other Input Formats.}\label{subsec:inputformat:others} Since the creation of DNNV, two new input formats,
VNNLIB~\cite{VNNLIB} and SOCRATES~\cite{SOCRATES}, have emerged in an attempt to standardize the verifier input space.
The current draft of VNNLIB also uses ONNX as the DNN input format, however it supports a much smaller set of operations than DNNV, supporting only 17 ONNX operations.
The VNNLIB property format is a subset of SMTLIB in which variables of the form $X_i$ are implicitly mapped to network inputs and variables of the form $Y_i$ are implicitly mapped to network outputs.
In its current form, this specification only supports DNN models with a single flat input tensor and single flat output tensor, whereas DNNP and ONNX can support DNN models with multiple inputs and output tensors of any shape.
SOCRATES proposes JSON format containing both the property and network specifications.
Because DNNV treats networks and properties independently, properties can be re-used for multiple networks, and only a single network must be stored to check multiple properties, resulting in a lower storage cost, especially for large networks.
Additionally, while the custom JSON format used by SOCRATES requires new DNN translation tools to be written to convert models to the required format, the ONNX format used by DNNV is commonly available in most machine learning frameworks.
While we believe that ONNX and DNNP are currently the most expressive and easily accessible input formats currently proposed, DNNV can provide benefits to any format through DNN simplification and property reduction to increase the applicability of all verifiers.

\subsection{Network Simplification}

In order to allow verifiers to be applied to a wider range of real world networks, \tool provides tools for network simplification.
Network simplification takes in an operation graph and applies a set of semantics preserving transformations to the operation graph to remove unsupported structures, or  to transform sequences of operations into a single more commonly supported operation.

An operation graph $G_\DNN = \aset{V_\DNN, E_\DNN}$ is a directed graph where nodes, $v \in V_\DNN$ represent operations, and edges $e \in E_\DNN$ represent inputs to those operations.
Simplification, $\mathit{simplify}: \mathcal{G} \rightarrow \mathcal{G}$, transforms an operation graph $G_\DNN \in \mathcal{G}$, to an equivalent DNN with more commonly supported structure, $\mathit{simplify}(G_\DNN) = G_{\DNN'}$, such that the resulting DNN has the same behavior as the original $\forall x. \DNN(x) = \DNN'(x)$, and uses more commonly supported structures.

\begin{figure}[t]
	\centering
	\includegraphics[width=0.9\linewidth]{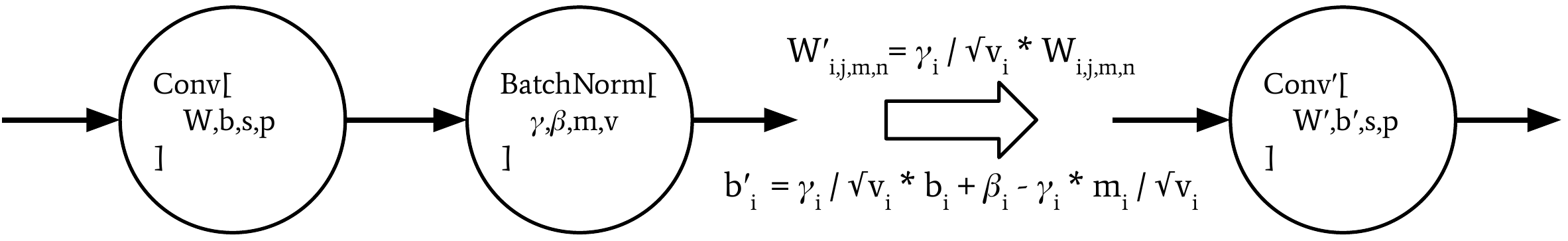}
	\vspace{-1em}
	\caption{
		Batch Normalization Simplification simplifies a batch norm following a convolution operation to an equivalent single convolution operation with modified weights and bias, while maintaining the strides and pads.
	}
	\label{fig:batchnormsimplification}
\end{figure}

One such simplification is \emph{batch normalization simplification}, which removes batch normalization operations from a network by combining them with a preceding convolution operation or generalized matrix multiplication (GEMM) operation.
This is possible since batch normalization, convolution, and GEMM operations are all affine operations.
The simplification of a batch normalization operation following a convolution operation is shown in \figref{fig:batchnormsimplification}.
If no applicable preceding layer exists, the batch normalization layer is converted into an equivalent convolution operation.
This simplification enables the application of verifiers without explicit support for batch normalization operations, such as \neurify and \marabou, to networks with these operations.

DNNV currently includes 6 additional
DNN simplifications, enumerated and described in more detail in Appendix~B of the extended version of this paper~\cite{shriver-etal:arxiv:2021:dnnv}.

\subsection{Property Reduction}

In order to allow verifiers to be applied to more general safety properties, \tool provides tools to reduce properties to a supported form.
For instance, properties can be translated to local robustness properties, which are required by \mipverify or reachability properties which are required by \reluplex.

Property reduction takes in a verification problem, which is comprised of a property specification and a network, and encodes it as an equivalid set of verification problems with properties in a form supported by a given verifier.

A \textit{verification problem} is a pair, $\psi = \aset{\DNN, \phi}$, of a DNN, $\DNN$, and a property specification $\phi$, formed to determine
whether $\DNN \models \phi$ is \emph{valid}.
Reduction, $reduce: \Psi \rightarrow P(\Psi)$, aims to transform a verification problem, $\aset{\DNN, \phi} = \psi \in \Psi$, to an equivalid
form, $reduce(\psi) = \set{\aset{\DNN_1, \phi_1}, \ldots, \aset{\DNN_k, \phi_k}}$, in which property specifications are in a common supported form.
As defined, reduction has two key properties.
The first property is that the set of resulting problems is equivalid with the original verification problem.
The second property is that the resulting set of problems all use the same property type.
Applying reduction enables verifiers to support a large set of verification problems by implementing support for a single property type.

\begin{figure}[t]
	\centering
	\includegraphics[width=0.9\linewidth]{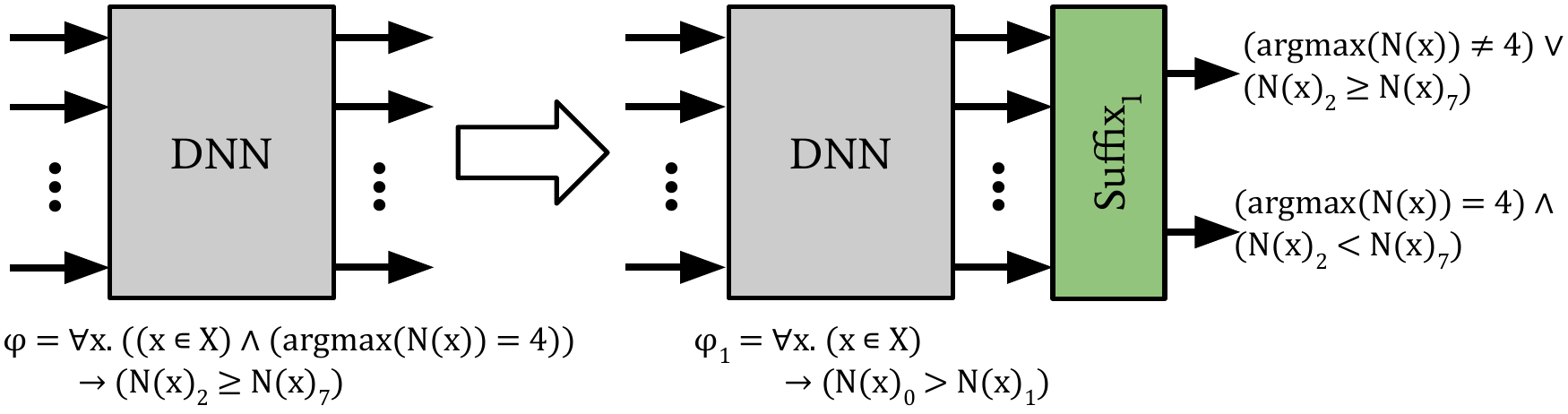}
	\vspace{-1em}
	\caption{Property reduction to a local robustness property adds a suffix that classifies outputs as violations or non-violations of the original output constraints, and changing the property to a common form of robustness property.}
	\label{fig:propertyreduction}
\end{figure}

For example, given a network that classifies images of clothing items, a user may want to specify that, if the network classifies an image as a coat, then the score given to the class of a pullover is not less than the score for the sneaker class.
The property is specified in the bottom left of \figref{fig:propertyreduction}.
Such a verification problem can be difficult to specify for many verifiers.
For example, \neurify would require writing code to specify linear constraints for the property and re-compiling the verifier, and \mipverify cannot support this property as is.
\tool can reduce this verification problem to an equivalent problem with a robustness property.

A high level overview of this reduction is shown in \figref{fig:propertyreduction};
a more detailed description is provided in Appendix~C of the extended version of this paper~\cite{shriver-etal:arxiv:2021:dnnv}.

\subsection{Input and Output Translation}

Because of the large variety of input formats required by the verifiers, one of the primary components of \tool translates from its internal representation of properties and networks to the input formats of each verifier.

\tool also requires an output translator that can parse the results of running a verifier and returns \texttt{sat}, \texttt{unsat}, or \texttt{unknown}.
If the result is \texttt{sat}, indicating a violation was found, \tool also returns a counter example to the property, and validates that it does violate the property by performing inference with the network and confirming that the input and output do not satisfy the property.

\section{Implementation}

\approach is written in 8400 lines of Python code and is available for download and re-use at \url{https://doi.org/10.5281/zenodo.4717922}.
Python was chosen due to its ubiquitous use for developing deep neural networks.
\tool currently supports 13 verifiers, and was designed to facilitate the integration of new verifiers. 
The currently supported verifiers are shown in \tabref{tab:verifiers}, along with their original input formats, and algorithmic approach.
Around 2000 LOC (of the 8400 total LOC) are used to integrate these 13 verifiers into \tool, with \planet requiring the most effort at 437 lines, and \bab and \babsb requiring the least effort with 89 lines of code due to re-use of the \planet input translator.

\subsection{Supporting Reuse and Extension}

\tool is designed to facilitate the integration of new verifiers.
The 5 primary components of \tool, DNN simplification, property reduction, input translation, verifier execution, and output translation are designed to be re-usable, and to facilitate the implementation of new components by providing utilities for traversing and manipulating operation graphs and properties.

Networks are represented as an operation graph, where nodes represent operations in the DNN and edges represent inputs and outputs to those operations. 
The operation graph can also be traversed using a visitor pattern. 
This pattern is particularly useful for the development of DNN simplifications and input translators.
It allows developers to easily traverse computation graphs in order to translate operations to the required format.
We provide built-in utilities for converting from our internal network representation to ONNX, PyTorch, and TensorFlow models.
The implementation also includes utilities for performing pattern matching on operation graphs.
We utilize this feature to provide utilities that transform a network from an operation graph representation to a sequential layer representation, which is particularly useful for the network input translator of \neurify, which requires DNNs to have a regular structure of a set of convolutional layers followed by fully connected layers, all with relu activations.

\subsection{Usage}

\tool can be run from the command line as follows:
\texttt{python -m dnnv <prop> <verifier> --network <name> <path>},
where the arguments correspond to a DNN model in the ONNX format, a property written
in \dsl, and the verifier to run.
Many additional options can be seen by specifying the \texttt{-h} option.

After execution,  
for each verifier, \tool reports the verification result as
one of \texttt{sat} (if the property was falsified), \texttt{unsat}
(if the property was proven to hold), \texttt{unknown} (if the
verifier is incomplete and could not prove the property holds),
or \texttt{error}, along with the reason for error, if an error
occurs during DNN and property translation, or during verifier
execution.
\tool also reports the time to translate and verify the property.

\section{Study}
\label{sec:study}

We now examine the applicability of verifiers to existing verification benchmarks with and without  \tool.
A verification benchmark consists of a set of verification problems which are used to evaluate the performance of a verifier.
A  problem is made of a DNN and a property specification and asks whether the property is valid for the given DNN.
We consider a verifier to support a benchmark if it can be run on that benchmark out of the box.
We consider a verifier to have support for a benchmark through \tool if \tool can be run on that benchmark with networks specified using \onnx and properties specified in \dsl, 
and can reduce, simplify, and translate the  problem to work with the target verifier.

\begin{table}[t]
	\centering
	\caption{Verifier benchmarks.}
	\label{tab:benchmarks}
	\begin{tabular}{lllcccccc}
		\toprule
		&                     &                                                                                                                                              & & & \multicolumn{4}{c}{\textbf{Features}} \\
		\textbf{Key} & \textbf{Name}       & \textbf{Uses}                                                                       & \textbf{\#P} & \textbf{\#$\DNN$} & \textbf{$\neg$HR} & \textbf{C} & \textbf{R} & \textbf{$\neg$ReLU} \\
		\midrule
		\textsc{AX}  & ACAS Xu             & \cite{katz-etal:CAV:2017:reluplex,bunel-etal:NIPS:2018:plnn,wang-etal:NIPS:2018:neurify,katz-etal:cav:2019:marabou,bak-etal:cav:2020:nnenum} & 10           & 45                        &               &               &                \\
		\textsc{CD}  & Collision Detection & \cite{ehlers:ATVA:2017:planet,bunel-etal:NIPS:2018:plnn,katz-etal:cav:2019:marabou} & 500          & 1                 &                   &            &            &                   \\
		\textsc{PM}  & \planet MNIST       & \cite{ehlers:ATVA:2017:planet}                                                      & 7            & 1                 & \checkmark        & \checkmark &            &                   \\
		\textsc{TS}  & TwinStream          & \cite{bunel-etal:arxiv:2017:plnnv1}                                                 & 1            & 81                &                   &            &            &                   \\
		\textsc{PCA} & PCAMNIST            & \cite{bunel-etal:NIPS:2018:plnn}                                                    & 12           & 17                &                   &            &            &                   \\
		\textsc{MM}  & \mipverify MNIST    & \cite{tjeng-etal:ICLR:2019:mipverify}                                               & 10000        & 5                 &                   & \checkmark &            &                   \\
		\textsc{MC}  & \mipverify CIFAR10  & \cite{tjeng-etal:ICLR:2019:mipverify}                                               & 10000        & 2                 &                   & \checkmark & \checkmark &                   \\
		\textsc{NM}  & \neurify MNIST      & \cite{wang-etal:NIPS:2018:neurify,henriksen-lomuscio:ECAI:2020:verinet}             & 500          & 4                 &                   & \checkmark &            &                   \\
		\textsc{NDb} & \neurify Drebin     & \cite{wang-etal:NIPS:2018:neurify}                                                  & 500          & 3                 &                   &            &            &                   \\
		\textsc{NDv} & \neurify DAVE       & \cite{wang-etal:NIPS:2018:neurify}                                                  & 200          & 1                 & \checkmark        & \checkmark &            &                   \\
		\textsc{DZM} & \deepzono MNIST     & \cite{singh-etal:NIPS:2018:deepzono}                                                & 1700         & 10                &                   & \checkmark & \checkmark & \checkmark        \\
		\textsc{DZC} & \deepzono CIFAR10   & \cite{singh-etal:NIPS:2018:deepzono}                                                & 1700         & 5                 &                   & \checkmark &            & \checkmark        \\
		\textsc{DPM} & \deeppoly MNIST     & \cite{singh-etal:POPL:2019:deeppoly,henriksen-lomuscio:ECAI:2020:verinet}           & 1500         & 8                 &                   & \checkmark &            & \checkmark        \\
		\textsc{DPC} & \deeppoly CIFAR10   & \cite{singh-etal:POPL:2019:deeppoly}                                                & 800          & 5                 &                   & \checkmark &            &                   \\
		\textsc{RZM} & \refinezono MNIST   & \cite{singh-etal:ICLR:2019:refinezono}                                              & 800          & 8                 &                   & \checkmark &            &                   \\
		\textsc{RZC} & \refinezono CIFAR10 & \cite{singh-etal:ICLR:2019:refinezono}                                              & 200          & 2                 &                   & \checkmark &            &                   \\
		\textsc{RPM} & \refinepoly MNIST   & \cite{singh-etal:NeurIPS:2019:refinepoly}                                           & 600          & 6                 &                   & \checkmark &            &                   \\
		\textsc{RPC} & \refinepoly CIFAR10 & \cite{singh-etal:NeurIPS:2019:refinepoly}                                           & 300          & 3                 &                   & \checkmark & \checkmark &                   \\
		\textsc{VC}  & \verinet CIFAR10    & \cite{henriksen-lomuscio:ECAI:2020:verinet}                                         & 250          & 1                 &                   & \checkmark &            &                   \\
		\bottomrule
	\end{tabular}
\end{table}

\noindent \textbf{Benchmarks.} To evaluate benchmark support, we collected the benchmarks used by each of the 13 verifiers supported by DNNV, 
and determined whether each verifier can run on the benchmark out of the box, and also whether they could be run on the benchmark when DNNV is applied. 
The verification benchmarks are shown in \tabref{tab:benchmarks} and are also described in more detail in Appendix~D of the extended version of this paper~\cite{shriver-etal:arxiv:2021:dnnv}.
Each row of the table corresponds to a benchmark, to which we assign a short key for identifying the benchmark.
For each benchmark, we give the name, some of the verifiers it  evaluated, the number of properties (\#P) and networks (\#$\DNN$), 
and features that can make it challenging for verifiers.
These features include
whether any properties cannot represent their input constraints using hyper-rectangles ($\neg$HR), 
whether any network in the benchmark contains convolution operations (C), 
whether any network contains residual structures (R), 
and whether any network uses any non-ReLU activation functions ($\neg$ReLU).

\begin{table}[t]
	\centering
	\caption{Benchmark support by each verifier. The left half of the circle is black if the verifier can support the benchmark out of the box, and is white otherwise. The right half is black if the verifier supports the benchmark through DNNV, 
	and is white otherwise. An absent circle indicates that the verifier can not be made to support some aspect of the benchmark.
	} 
	\label{tab:benchmarksupport}
	\begin{tabular}{lccccccccccccccccccc}
		\toprule
		& \multicolumn{19}{c}{\textbf{Benchmark}} \\
		\textbf{Verifier} & \rotatebox{90}{\textsc{AX}} & \rotatebox{90}{\textsc{CD}} & \rotatebox{90}{\textsc{PM}} & \rotatebox{90}{\textsc{TS}} & \rotatebox{90}{\textsc{PCA}} & \rotatebox{90}{\textsc{MM}} & \rotatebox{90}{\textsc{MC}} & \rotatebox{90}{\textsc{NM}} & \rotatebox{90}{\textsc{NDb}} & \rotatebox{90}{\textsc{NDv}} & \rotatebox{90}{\textsc{DZM}} & \rotatebox{90}{\textsc{DZC}} & \rotatebox{90}{\textsc{DPM}} & \rotatebox{90}{\textsc{DPC}} & \rotatebox{90}{\textsc{RZM}} & \rotatebox{90}{\textsc{RZC}} & \rotatebox{90}{\textsc{RPM}} & \rotatebox{90}{\textsc{RPC}} & \rotatebox{90}{\textsc{VC}} \\
		\midrule
		\reluplex         & \filledcirc{full}           & \filledcirc{left}           & \filledcirc{empty}          & \filledcirc{full}           & \filledcirc{full}            & \filledcirc{empty}          & \filledcirc{empty}          & \filledcirc{empty}          & \filledcirc{right}           & \filledcirc{empty}           &                              &                              &                              & \filledcirc{empty}           & \filledcirc{empty}           & \filledcirc{empty}           & \filledcirc{empty}           & \filledcirc{empty}           & \filledcirc{empty}          \\
		\planet           & \filledcirc{full}           & \filledcirc{left}           & \filledcirc{full}       & \filledcirc{full}           & \filledcirc{full}            & \filledcirc{right}          & \filledcirc{right}          & \filledcirc{right}          & \filledcirc{right}           & \filledcirc{right}       &                              &                              &                              & \filledcirc{right}           & \filledcirc{right}           & \filledcirc{right}           & \filledcirc{right}           & \filledcirc{right}           & \filledcirc{right}          \\
		\bab              & \filledcirc{full}           & \filledcirc{left}           & \filledcirc{full}       & \filledcirc{full}           & \filledcirc{full}            & \filledcirc{right}          & \filledcirc{empty}          & \filledcirc{right}          & \filledcirc{right}           & \filledcirc{right}       &                              &                              &                              & \filledcirc{right}           & \filledcirc{right}           & \filledcirc{right}           & \filledcirc{right}           & \filledcirc{empty}           & \filledcirc{right}          \\
		\babsb            & \filledcirc{full}           & \filledcirc{left}           & \filledcirc{full}       & \filledcirc{full}           & \filledcirc{full}            & \filledcirc{right}          & \filledcirc{empty}          & \filledcirc{right}          & \filledcirc{right}           & \filledcirc{right}       &                              &                              &                              & \filledcirc{right}           & \filledcirc{right}           & \filledcirc{right}           & \filledcirc{right}           & \filledcirc{empty}           & \filledcirc{right}          \\
		\mipverify        & \filledcirc{right}          & \filledcirc{empty}          & \filledcirc{empty}          & \filledcirc{right}          & \filledcirc{right}           & \filledcirc{full}           & \filledcirc{left}           & \filledcirc{empty}          & \filledcirc{right}           & \filledcirc{empty}           &                              &                              &                              & \filledcirc{empty}           & \filledcirc{empty}           & \filledcirc{empty}           & \filledcirc{empty}           & \filledcirc{empty}           & \filledcirc{empty}          \\
		\neurify          & \filledcirc{full}           & \filledcirc{empty}          & \filledcirc{right}      & \filledcirc{right}          & \filledcirc{right}           & \filledcirc{right}          & \filledcirc{empty}          & \filledcirc{full}           & \filledcirc{full}            & \filledcirc{full}        &                              &                              &                              & \filledcirc{right}           & \filledcirc{right}           & \filledcirc{right}           & \filledcirc{right}           & \filledcirc{empty}           & \filledcirc{right}          \\
		\deepzono         & \filledcirc{full}           & \filledcirc{empty}          & \filledcirc{empty}      & \filledcirc{right}          & \filledcirc{right}           & \filledcirc{right}          & \filledcirc{right}          & \filledcirc{right}          & \filledcirc{right}           & \filledcirc{empty}       & \filledcirc{full}            & \filledcirc{full}            & \filledcirc{full}            & \filledcirc{full}            & \filledcirc{full}            & \filledcirc{full}            & \filledcirc{full}            & \filledcirc{full}            & \filledcirc{right}          \\
		\deeppoly         & \filledcirc{full}           & \filledcirc{empty}          & \filledcirc{empty}      & \filledcirc{right}          & \filledcirc{right}           & \filledcirc{right}          & \filledcirc{right}          & \filledcirc{right}          & \filledcirc{right}           & \filledcirc{empty}       & \filledcirc{full}            & \filledcirc{full}            & \filledcirc{full}            & \filledcirc{full}            & \filledcirc{full}            & \filledcirc{full}            & \filledcirc{full}            & \filledcirc{full}            & \filledcirc{right}          \\
		\refinezono       & \filledcirc{full}           & \filledcirc{empty}          & \filledcirc{empty}      & \filledcirc{right}          & \filledcirc{right}           & \filledcirc{right}          & \filledcirc{right}          & \filledcirc{right}          & \filledcirc{right}           & \filledcirc{empty}       & \filledcirc{full}            & \filledcirc{full}            & \filledcirc{full}            & \filledcirc{full}            & \filledcirc{full}            & \filledcirc{full}            & \filledcirc{full}            & \filledcirc{full}            & \filledcirc{right}          \\
		\refinepoly       & \filledcirc{full}           & \filledcirc{empty}          & \filledcirc{empty}      & \filledcirc{right}          & \filledcirc{right}           & \filledcirc{right}          & \filledcirc{right}          & \filledcirc{right}          & \filledcirc{right}           & \filledcirc{empty}       & \filledcirc{full}            & \filledcirc{full}            & \filledcirc{full}            & \filledcirc{full}            & \filledcirc{full}            & \filledcirc{full}            & \filledcirc{full}            & \filledcirc{full}            & \filledcirc{right}          \\
		\marabou          & \filledcirc{full}           & \filledcirc{left}           & \filledcirc{right}      & \filledcirc{full}           & \filledcirc{full}            & \filledcirc{right}          & \filledcirc{empty}          & \filledcirc{right}          & \filledcirc{right}           & \filledcirc{right}       &                              &                              &                              & \filledcirc{right}           & \filledcirc{right}           & \filledcirc{right}           & \filledcirc{right}           & \filledcirc{empty}           & \filledcirc{right}          \\
		\nnenum           & \filledcirc{full}           & \filledcirc{empty}          & \filledcirc{right}      & \filledcirc{right}          & \filledcirc{right}           & \filledcirc{right}          & \filledcirc{empty}          & \filledcirc{right}          & \filledcirc{right}           & \filledcirc{right}       &                              &                              &                              & \filledcirc{right}           & \filledcirc{right}           & \filledcirc{right}           & \filledcirc{right}           & \filledcirc{empty}           & \filledcirc{right}          \\
		\verinet          & \filledcirc{right}          & \filledcirc{empty}          & \filledcirc{empty}      & \filledcirc{right}          & \filledcirc{right}           & \filledcirc{right}          & \filledcirc{empty}          & \filledcirc{full}           & \filledcirc{right}           & \filledcirc{empty}       & \filledcirc{empty}           & \filledcirc{right}           & \filledcirc{right}           & \filledcirc{right}           & \filledcirc{right}           & \filledcirc{right}           & \filledcirc{right}           & \filledcirc{empty}           & \filledcirc{full}           \\
		\bottomrule
	\end{tabular}
\end{table}

\noindent \textbf{Results.} The support of verifiers for each benchmark is shown in \tabref{tab:benchmarksupport}.
Each row of this table corresponds to one of the 13 verifiers supported by \tool, and each column corresponds to one of the 19 benchmarks identified in \tabref{tab:benchmarks}.
Each cell of the table may contain a circle that identifies the support of the verifier for the benchmark.
The left half of the circle is black if the verifier can support the benchmark out of the box, and is white otherwise. 
The right half is black if the verifier supports the benchmark through DNNV, 
and white otherwise.
An absent circle indicates that the verifier can not be made to support some aspect of the benchmark.
For the benchmarks shown here, this is always due to the presence of non-ReLU activation functions in some of the networks in the benchmarks.

As shown in \tabref{tab:benchmarksupport}, \tool can dramatically increase the support of verifiers for benchmarks.
For example, the \planet verifier could originally be run on 5 of the 19 benchmarks, but could be run on 16 using \tool.
Similarly, the \nnenum verifier, could originally only be run on 1 of the existing benchmarks, but could be run on 13 using \tool.
\textbf{Of the 223 pairs of verifiers and benchmarks for which support may be possible, 166 of them are currently supported by \tool, an increase of over 2.4 times the 68 pairs supported without \tool.}

\section{Conclusion}

We present  the \tool framework for reducing the burden on DNN verifier researchers, developers, and users. \tool standardizes input and output formats, includes a simple yet expressive DSL for specifying DNN properties, and provides powerful simplification and reduction operations to  facilitate the application, development, and comparison of DNN verifiers. Our study showed the potential of \tool and we made its implementation available, with support for 13 verifiers, and extensive documentation.

\section{Acknowledgment}
This material is based in part upon work supported by the
National Science Foundation under Grant Number 1900676 and 2019239.

\newpage

\appendix

\section{DNNP}
\label{appendix:dnnp}

A property specification defines the desired behavior of a DNN in a formal language. DNNV uses a custom Python-embedded DSL for writing property specifications, which we call DNNP.
Embedding \dsl in Python allows for the rich ecosystem of the host language to be used in writing specifications~\cite{hudak:1998:dsl}.
However, DNNV is still of a work-in-progress, so some expressions (such as star expressions) are not yet supported by our property parser.
We are still working to fully support all Python expressions, but the current version supports the most common use cases.

\begin{figure}
    \centering
    \scriptsize
    \begin{grammar}
        <property> ::= <python-imports> <assignment-list> <expr>

        <python-imports> ::= `' \alt <python-imports> `import' <id>
        \alt <python-imports> `import' <id> `as' <id>
        \alt <python-imports> `from' <id> `import' <id>

        <assignment-list> ::= `' \alt <assignment-list> <assignment>

        <assignment> ::= <id> `=' <expr>

        <expr> ::= `Forall(' <id> `,' <expr> `)'
        \alt `And(' <expr-seq> `)'
        \alt `Or(' <expr-seq> `)'
        \alt `Implies(' <expr> `,' <expr> `)'
        \alt `Parameter(' <id> `, type=' <id> `)'
        \alt ...
        \alt <python-expr>

        <expr-seq> ::= <expr> | <expr-seq> `,' <expr>
    \end{grammar}
    \vspace{-1.5em}
    \caption{Subset of the grammar for \dsl.
    }
    \label{fig:dsl}
\end{figure}

\figref{fig:dsl} shows the definition of the \dsl grammar.
The general structure of a property specification is as follows:

\begin{enumerate}
    \item A set of python module imports
    \item A set of variable definitions
    \item A property expression
\end{enumerate}

\subsection{Imports}

Imports have the same syntax as Python import statements, and they can be used to import arbitrary Python modules and packages.
This allows re-use of datasets or input pre-processing code.
For example, the Python package \texttt{numpy} can be imported to load a dataset.
Inputs can then be selected from the dataset, or statistics, such as the mean data point, can be computed on the fly.

\subsection{Definitions}

After any imports, DNNP allows a sequence of assignments to define variables that can be used in the final property specification.
For example, \texttt{i = 0}, will define the variable \texttt{i} to a value of 0.

These definitions can be used to load data and configuration parameters, or to alias expressions that may be used in the property formula.
For example, if the \texttt{torchvision.datasets} package has been imported, then \texttt{data = datasets.MNIST("/tmp")} will define a variable \texttt{data} referencing the MNIST dataset from this package.
Additionally, the \texttt{Parameter} class can be used to declare parameters that can be specified at run time.
\texttt{eps = Parameter("epsilon", type=float)}, will define the variable \texttt{eps} to have type float and will expect a value to be specified at run time. This value can be specified to DNNV with the option \texttt{--prop.epsilon}.

Definitions can also assign expressions to variables to be used in the property specification later.
For example, \texttt{x_in_unit_hyper_cube = 0 <= x <= 1} can be used to assign an expression specifying that the variable x is within the unit hyper cube to a variable.
This could be useful for more complex properties with a lot of redundant sub-expressions.

A network can be defined using the \texttt{Network} class.
\texttt{N = Network("N")}, specifies a network with the name \texttt{N} (which is used at run time to concretize the network with a specific DNN model).
All networks with the same name refer to the same model.

\subsection{Property Expression}

Finally, the last part of the property specification is the property formula itself.
It must appear at the end of the property specification.
All statements before the property formula must be either import or assignment statements.

The property formula defines the desired behavior of the DNN in a subset of first-order-logic.
It can make use of arbitrary Python code, as well as any of the expressions defined before it.

DNNP provides many functions to define expressions.
The function \texttt{Forall( symbol, expression)} can be used to specify that the provided expression is valid for all values of the specified symbol.
The function \texttt{And(*expression)}, specifies that all of the expressions passed as arguments to the function must be valid.
\texttt{And(expr1, expr2)} can be equivalently specified as \texttt{expr1 \& expr2}.
The function \texttt{Or(*expression)}, specifies that at least one of the expressions passed as arguments to the function must be valid.
\texttt{Or(expr1, expr2)} can be equivalently specified as \texttt{expr1 | expr2}.
The function \texttt{Implies(expression1, expression2)}, specifies that if \texttt{expression1} is true, then \texttt{expression2} must also be true.
The \texttt{argmin} and \texttt{argmax} functions can be used to get the argmin or argmax value of a network’s output, respectively.

In property expressions, networks can be called like functions to get the outputs for the network for a given input. Networks can be applied to symbolic variables (such as universally quantified variables), as well as numpy arrays.

\section{DNN Simplifications}
\label{appendix:dnnsimplifications}

In this section, we describe the DNN simplifications currently performed by DNNV.
This is not a full list of all possible simplifications, but have been useful for some networks we have encountered in practice.

\subsection{BatchNormalization Simplification}

BatchNormalization simplification removes BatchNormalization operations from a network by combining them with a preceeding Conv operation or Gemm operation. If no applicable preceeding layer exists, the batch normalization layer is converted into an equivalent Conv operation. This simplification can decrease the number of operations in the model and increase verifier support, since many verifiers do not support BatchNormalization operations.

\subsection{Identity Removal}

DNNV removes many types of identity operations from DNN models, including explicit Identity operations, Concat operations with a single input, and Flatten operations applied to flat tensors. Such operations can occur in DNN models due to user error, or through automated processes, and their removal does not affect model behavior.

\subsection{Convert MatMul followed by Add to Gemm}

DNNV converts instances of MatMul (matrix multiplication) operations, followed immediately by Add operations to an equivalent Gemm (generalized matrix multiplication) operation. The Gemm operation generalizes the matrix multiplication and addition, and can simplify subsequent processing and analysis of the DNN.

\subsection{Combine Consecutive Gemm}

DNNV combines two consecutive Gemm operations into a single equivalent Gemm operation, reducing the number of operations in the DNN.

\subsection{Combine Consecutive Conv}

In special cases, DNNV can combine consecutive Conv (convolution) operations into a single equivalent Conv operation, reducing the number of operations in the DNN.
Currently, DNNV can combine Conv operations when the first Conv uses a diagonal 1 by 1 kernel with a stride of 1 and no zero padding, and the second Conv has no zero padding. This case can occur after converting a normalization layer (such as BatchNormalization) to a Conv operation.

\subsection{Bundle Pad}

DNNV can bundle explicit Pad operations with an immediately succeeding Conv or MaxPool operation. This both simplifies the DNN model, and increases support, since many verifiers do not support explicit Pad operations (but can support padding as part of a Conv or MaxPool operation).

\subsection{Move Activations Backward}

DNNV moves activation functions through reshaping operations to immediately succeed the most recent non-reshaping operation. This is possible since activation functions are element-wise operations. This transformation can simplify pattern matching in later analysis steps by reducing the number of possible patterns.

\section{Property Reduction}
\label{appendix:propreduction}

In this section, we provide the algorithm for reducing properties to reachability properties, as well as proofs for the equivalidity of the resulting set of reachability properties and original property.
Algorithm~\ref{alg:reduction} is the overall reduction algorithm, while Algorithm~\ref{alg:disjuncttohpoly} and \ref{alg:constructsuffix} are subprocedures used by the main algorithm.
The algorithm and proofs for reduction to other property types (such as robustness) are very similar.

We assume that properties are of the form $\forall{x\in \mathbb{R}^{n}}: \phi_{\mathcal{X}}(x) \rightarrow \phi_{\mathcal{Y}}(\DNN(x))$, where $\phi_{\mathcal{X}}$ is a set of constraints over the inputs -- the pre-condition, and $\phi_{\mathcal{Y}}$ is a set of constraints over the outputs -- the post-condition.
We also assume that constraints are represented as linear inequalities.

\begin{algorithm}[ht]
	\small
	\DontPrintSemicolon
	\KwIn{Correctness problem $\langle \DNN, \phi \rangle$}
	\KwOut{A set of robustenss problems $\set{\langle \DNN_1, \phi_1 \rangle, ..., \langle \DNN_i, \phi_i \rangle}$}
	\Begin{
		$\phi' \leftarrow DNF(\neg\phi)$\;\label{algln:negdnf}
		$\Psi \leftarrow \emptyset$\;
		\For{$\mathit{disjunct} \in \phi'$}{
			$\phi_{\mathcal{X}} \leftarrow \mathrm{extract\_input\_constraints}(disjunct)$\;
			$\phi_{\mathcal{Y}} \leftarrow \mathrm{extract\_output\_constraints}(disjunct)$\;
			$hspoly \leftarrow \mathrm{disjunct\_to\_hpolytope}(\phi_{\mathcal{Y}})$\;\label{algln:hpoly}
			$suffix \leftarrow \mathrm{construct\_suffix}(hspoly)$\;\label{algln:constructsuffix}
			$\DNN' \leftarrow \mathit{suffix} \circ \DNN$\;\label{algln:constructdnn}
			$\phi' \leftarrow \forall x.(x \in \phi_{\mathcal{X}} \implies \DNN'(x)_0 > \DNN'(x)_1)$\;\label{algln:constructphi}
			$\Psi \leftarrow \Psi \cup \langle \DNN', \phi' \rangle$\label{algln:constructpsi}
		}
		\Return{$\Psi$}\;
	}
	\caption{Property Reduction\label{alg:reduction}}
\end{algorithm}

\begin{algorithm}[ht]
	\footnotesize
	\DontPrintSemicolon
	\KwIn{Conjunction of linear inequalities $\phi_i$}
	\KwOut{Halfspace polytope $H$}
	\Begin{
		$H \leftarrow (A, b)$ where $A$ is an $(|\phi_i|)\times(m)$ matrix where columns correspond to the output variables $N(x)_0$ to $N(x)_{m-1}$\;
		\For{$ineq_j \in \phi_i$}{
			\If{$ineq_j$ uses $\geq$}{\label{algln:geq2leqstart}
				swap lhs and rhs;	switch inequality to $\leq$\;
			}
			\ElseIf{$ineq_j$ uses $>$}{
				swap lhs and rhs;	switch inequality to $<$\;
			}\label{algln:gt2ltend}
			move variables to lhs\label{algln:var2lhs}; move constants to rhs\;\label{algln:const2rhs}
			\If{$ineq_j$ uses $<$}{\label{algln:lt2leqstart}
				decrement rhs; switch inequality to $\leq$\;
			}\label{algln:lt2leqend}
			$A_{j} \leftarrow$ coefficients of variables on lhs\;\label{algln:lhs2A}
			$b_{j} \leftarrow$ rhs constant\;\label{algln:rhs2b}
		}
		\Return{$H$}\;
	}
	\caption{disjunct\_to\_hpolytope\label{alg:disjuncttohpoly}}
\end{algorithm}

\begin{algorithm}[ht]
	\footnotesize
	\DontPrintSemicolon
	\KwIn{Halfspace polytope $H = (A, b)$}
	\KwOut{A DNN with 2 fully connected layers $S$}
	\Begin{
		$S_h \leftarrow \mathrm{ReLU}(\mathrm{FullyConnectedLayer}(A, -b))$\;\label{algln:hiddensuffix}
		$W \leftarrow
			\begin{bmatrix}
				1 & 1 & ... & 1 \\
				0 & 0 & ... & 0
			\end{bmatrix}$\;
		$S_o \leftarrow \mathrm{FullyConnectedLayer}(W, \vec{0})$\;
		$S \leftarrow S_o \circ S_h$\;
		\Return{$S$}\;
	}
	\caption{construct\_suffix\label{alg:constructsuffix}}
\end{algorithm}

\subsection{Proofs}

In order to prove that the property reduction produces a set of correctness problems equivalid to the original problem, we first prove the following lemmas:

\begin{lemma}\label{lemma:halfspaceconstruction}
	Let $\phi$ be a conjunction of linear inequalities over the variables $x_i$ for i from $0$ to $n-1$.
	We can construct a halfspace polytope $H = (A, b)$ with Algorithm~\ref{alg:disjuncttohpoly} such that $(Ax \leq b) \Leftrightarrow (x \models \phi)$.
\end{lemma}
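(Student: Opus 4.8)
The plan is to reduce the claimed equivalence to a per-inequality statement and then check that each rewriting step in Algorithm~\ref{alg:disjuncttohpoly} preserves the solution set of the inequality it processes. Note that $Ax \le b$ holds exactly when every row inequality $A_j x \le b_j$ holds, and $x \models \phi$ holds exactly when $x$ satisfies every conjunct $ineq_j$ of $\phi$. Since the loop emits one row $(A_j, b_j)$ per conjunct $ineq_j$, it suffices to prove, for each $j$, that $(A_j x \le b_j) \Leftrightarrow (x \models ineq_j)$; the full equivalence then follows by conjoining over $j$.

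First I would fix an arbitrary conjunct $ineq_j$ and track it through the loop body under the invariant that the current rewritten inequality has the same satisfying assignments $x$ as the original $ineq_j$. The orientation step (lines~\ref{algln:geq2leqstart}--\ref{algln:gt2ltend}) rewrites $L \ge R$ as $R \le L$ and $L > R$ as $R < L$ by swapping the two sides; since $L \ge R \Leftrightarrow R \le L$ and $L > R \Leftrightarrow R < L$, this leaves the solution set unchanged and guarantees that only $\le$ and $<$ remain. The algebraic step (line~\ref{algln:var2lhs}) moves variable terms to the left and constants to the right, which is a rearrangement that adds equal quantities to both sides and hence preserves satisfying assignments, yielding an inequality of the form $c^\top x < d$ or $c^\top x \le d$.

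The remaining step, which I expect to be the crux, is the strict-to-nonstrict conversion (lines~\ref{algln:lt2leqstart}--\ref{algln:lt2leqend}) rewriting $c^\top x < d$ as $c^\top x \le d-1$. Over the reals this is \emph{not} an equivalence, so the argument must invoke the domain assumption on the processed quantities: when $c^\top x$ takes only integer values we have $c^\top x < d \Leftrightarrow c^\top x \le d - 1$, which closes this case, while the already-nonstrict case $c^\top x \le d$ passes through untouched. I would state this integrality hypothesis explicitly and flag it as the delicate point of the lemma. Finally, reading off $A_j$ as the left-hand coefficient vector and $b_j$ as the right-hand constant (lines~\ref{algln:lhs2A}--\ref{algln:rhs2b}) reproduces exactly the row inequality $A_j x \le b_j$, which by the chain of equivalences has the same solutions as $ineq_j$. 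Conjoining over all $j$ then gives $(Ax \le b) \Leftrightarrow (x \models \phi)$.
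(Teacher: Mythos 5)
Your overall decomposition matches the paper's: orient every conjunct to use $\le$ or $<$, move variables left and constants right, convert strict to non-strict, and then read off one row of $(A,b)$ per conjunct. (Your final step --- proving the per-row equivalence $(A_j x \le b_j) \Leftrightarrow (x \models ineq_j)$ and conjoining over $j$ --- is in fact a cleaner way to finish than the paper's two separate proofs by contradiction for each direction of $(Ax \le b) \Leftrightarrow (x \models \phi)$, though both arguments rest on the same observation that the construction is a bijection between conjuncts and rows.)

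The genuine gap is in your handling of the strict-to-nonstrict step, which you correctly identify as the crux but resolve with the wrong hypothesis. You read ``decrement rhs'' as $d \mapsto d-1$ and justify the equivalence $c^\top x < d \Leftrightarrow c^\top x \le d-1$ by assuming $c^\top x$ is integer-valued. That assumption is not available here: the variables are DNN outputs (the columns of $A$ correspond to $N(x)_0,\dots,N(x)_{m-1}$), which are real-valued, so integrality fails and your equivalence with $d-1$ is simply false in the intended setting. What the paper actually does is replace $d$ by $d'$, the \emph{largest representable number less than $d$} in the finite-precision arithmetic being used (e.g., the IEEE 754 predecessor of $d$), and proves $c^\top x < d \Leftrightarrow c^\top x \le d'$ by contradiction: there is no representable value strictly between $d'$ and $d$. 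This shifts the burden from an integrality assumption (which is unavailable) to a representability assumption (which the paper makes explicit and discusses separately, noting it holds for floating-point arithmetic but not for $\mathbb{R}$). Your proof would be repaired by substituting that predecessor argument for your integer one; as written, the step fails.
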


\begin{proof}
	We first show that every linear inequality in the conjunction can be reformulated to the form $a_0x_0 + a_1x_1 + ... + a_{n-1}x_{n-1} \leq b$.
	It is trivial to show that inequalities with a $\geq$ comparison can be manipulated to an equivalent form with $\leq$, and $>$ can be manipulated to become $<$.
	It is also trivial to show that the inequality can be manipulated to have variables on lhs and a constant value on rhs.
	This results in a conjunction of linear inequalities of the form $a_0x_0 + a_1x_1 + ... + a_{n-1}x_{n-1} < b$ and $a_0x_0 + a_1x_1 + ... + a_{n-1}x_{n-1} \leq b$.
	Finally, the $<$ comparison can be changed to a $\leq$ comparison by decrementing the constant on the right-hand-side from $b$ to $b'$ where $b'$ is the largest representable number less than $b$.

	We prove that linear inequalities using the $<$ comparison can be reformulated to use a $\leq$ comparison using a proof by contradiction.
	Assume that either $a_0x_0 + a_1x_1 + ... + a_{n-1}x_{n-1} < b$ and $a_0x_0 + a_1x_1 + ... + a_{n-1}x_{n-1} > b'$ or $a_0x_0 + a_1x_1 + ... + a_{n-1}x_{n-1} \geq b$ and $a_0x_0 + a_1x_1 + ... + a_{n-1}x_{n-1} \leq b'$.
	Then one of two cases must be true.
	Either $b' < a_0x_0 + a_1x_1 + ... + a_{n-1}x_{n-1} < b$, a contradiction, since $a_0x_0 + a_1x_1 + ... + a_{n-1}x_{n-1}$ cannot be both larger than the largest representable number less than $b$ and also less than $b$.\footnote{We further discuss the assumption that such a number exists in Section~\ref{sec:largestrepresentablenumber}.}
	Or $b \leq a_0x_0 + a_1x_1 + ... + a_{n-1}x_{n-1} \leq b'$, a contradiction, since $b' < b$ by definition.

	Given a conjunction of linear inequalities in the form $a_0x_0 + a_1x_1 + ... + a_{n-1}x_{n-1} \leq b$, Algorithm~\ref{alg:disjuncttohpoly} constructs $A$ and $b$ with a row in $A$ and value in $b$ corresponding to each conjunct.
	There are two cases to prove: $(Ax \leq b) \rightarrow (x \models \phi)$ and $(x \models \phi) \rightarrow (Ax \leq b)$.

	We prove case 1 by contradiction.
	Assume $(Ax \leq b)$ and $(x \not\models \phi)$.
	By the construction of $H$ in Algorithm~\ref{alg:disjuncttohpoly}, each conjunct of $\phi$ is exactly 1 constraint in $H$.
	If $Ax \leq b$, then all constraints in $H$ must be satisifed, and thus all conjuncts in $\phi$ must be satisfied and $x \models \phi$, a contradiction.

	We prove case 2 by contradiction.
	Assume $(x \models \phi)$ and $(Ax \not\leq b)$.
	By the construction of $H$ in Algorithm~\ref{alg:disjuncttohpoly}, each conjunct of $\phi$ is exactly 1 constraint in $H$.
	If $x \models \phi$, then all conjuncts in $\phi$ must be satisfied, and thus all constraints in $H$ must be satisifed and $Ax \leq b$, a contradiction.
\end{proof}

\begin{lemma}\label{lemma:suffixconstruction}
	Let $H = (A, b)$ be a halfspace polytope such that $Ax \leq b$. Then, a DNN, $\DNN_s$, can be built with Algorithm~\ref{alg:constructsuffix} that classifies whether its outputs satisfy $A(\DNN(x)) \leq b$ or not. Formally, $\DNN(x) \in H \Leftrightarrow \DNN_s(x)_0 \leq \DNN_s(x)_1$.
\end{lemma}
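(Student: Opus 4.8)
The plan is to trace an arbitrary input through the two fully connected layers produced by Algorithm~\ref{alg:constructsuffix}, read off the two output coordinates in closed form, and then exploit the non-negativity of ReLU to convert the comparison $\DNN_s(x)_0 \leq \DNN_s(x)_1$ into the polytope-membership condition $A\DNN(x) \leq b$.

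First I would fix an input $x$ and set $y = \DNN(x)$, so that $\DNN_s(x) = S(y)$ where $S = S_o \circ S_h$ is the suffix returned by the algorithm. Propagating $y$ through the hidden layer gives $S_h(y) = \mathrm{ReLU}(Ay - b)$, whose $j$-th coordinate is $z_j = \max(0,\, (Ay)_j - b_j)$. Propagating through the output layer, whose weight matrix $W$ has an all-ones first row, an all-zeros second row, and zero bias, yields $\DNN_s(x)_0 = \sum_j z_j$ and $\DNN_s(x)_1 = 0$.

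Next I would use the key observation that each $z_j \geq 0$, being a ReLU output. Hence $\DNN_s(x)_0 = \sum_j z_j \geq 0 = \DNN_s(x)_1$ always holds, so the inequality $\DNN_s(x)_0 \leq \DNN_s(x)_1$ is equivalent to $\sum_j z_j = 0$, which (again by non-negativity of the summands) holds if and only if $z_j = 0$ for every $j$. By definition of $z_j$ this is equivalent to $(Ay)_j - b_j \leq 0$ for all $j$, i.e.\ $Ay \leq b$, i.e.\ $\DNN(x) \in H$. Chaining these equivalences yields the claimed biconditional.

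The argument is a direct computation, so there is no genuinely hard step; the one point deserving care is the use of a non-strict inequality in the conclusion. This is exactly right because a point on the boundary of $H$ (some constraints tight, none violated) produces $\sum_j z_j = 0 = \DNN_s(x)_1$, so membership corresponds to equality of the two outputs, whereas a strictly violated constraint produces $z_j > 0$ for that $j$ and hence $\DNN_s(x)_0 > \DNN_s(x)_1$. Note also that the lemma takes $H = (A,b)$ as given and so does not itself invoke Lemma~\ref{lemma:halfspaceconstruction}; the two results are combined only downstream in the main reduction.
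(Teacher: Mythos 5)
Your proof is correct and rests on exactly the same computation as the paper's: the hidden layer outputs $\mathrm{ReLU}(A\DNN(x)-b)$, the first output neuron sums these non-negative values while the second is constantly $0$, so the sum vanishes iff every constraint is satisfied. The paper merely packages the two directions as separate proofs by contradiction, whereas you present the same facts as a direct chain of equivalences; there is no substantive difference.
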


\begin{proof}
	There are 2 cases:
	\begin{enumerate}
		\item $\DNN(x) \in H \rightarrow \DNN_s(x)_0 \leq \DNN_s(x)_1$
		\item $\DNN_s(x)_0 \leq \DNN_s(x)_1 \rightarrow \DNN(x) \in H$
	\end{enumerate}

	We prove case 1 by contradiction.
	Assume $\DNN(x) \in H$ and $\DNN_s(x)_0 > \DNN_s(x)_1$.
	From Algorithm~\ref{alg:constructsuffix}, each neuron in the hidden layer of $\DNN_s$ corresponds to one constraint in $H$.
	The weights of each neuron are the values in the corresponding row of $A$, and the bias is the negation of the corresponding value of $b$.
	If the output $\DNN(x)$ satisfies the constraint, then the value of the neuron will be less than or equal to 0, otherwise it will be greater than 0.
	After application of the ReLU activation function, all neurons will be equal to 0 if their corresponding constraint is satisfied by $\DNN(x)$ and greater than 0 otherwise.
	The first neuron in the final layer sums all of the neurons in the hidden layer, while the second neuron has a constant value of 0.
	If $\DNN(x) \in H$, then all neurons in the hidden layer after activation must have a value of 0 since all constraints are satisfied.
	However, if all neurons have a value of 0, then their sum must also have a value of zero, and therefore $\DNN_s(x)_0 = \DNN_s(x)_1$, a contradiction.

	We prove case 2 by contradiction.
	Assume $\DNN_s(x)_0 \leq \DNN_s(x)_1$ and $\DNN(x) \not\in H$.
	From Algorithm~\ref{alg:constructsuffix}, each neuron in the hidden layer of $\DNN_s$ corresponds to one constraint in $H$.
	The weights of each neuron are the values in the corresponding row of $A$, and the bias is the negation of the corresponding value of $b$.
	If the output $\DNN(x)$ satisfies the constraint, then the value of the neuron will be less than or equal to 0, otherwise it will be greater than 0.
	After application of the ReLU activation function, all neurons will be equal to 0 if their corresponding constraint is satisfied by $\DNN(x)$ and greater than 0 otherwise.
	The first neuron in the final layer sums all of the neurons in the hidden layer, while the second neuron has a constant value of 0.
	If $\DNN(x) \not\in H$, then at least one neurons in the hidden layer after activation must have a value greater than 0 since at least one constraint is not satisfied.
	However, if any neuron has a value greater than 0, then their sum must also have a value greater than zero, and therefore $\DNN_s(x)_0 > \DNN_s(x)_1$, a contradiction.
\end{proof}

\begin{theorem}\label{thm:equivalid}
	Let $\psi = \aset{\DNN, \phi}$ be an arbitrary correctness problem with a DNN correctness property defined as a formula of disjunctions and conjunctions of linear inequalities over the input and output variables of $\DNN$.
	Property Reduction (Algorithm~\ref{alg:reduction}) maps $\psi$ to an equivalid set of correctness problems $reduce(\psi) = \set{\aset{\DNN_1, \phi_1}, \ldots, \aset{\DNN_k, \phi_k}}$.
	$$\DNN \models \psi \Leftrightarrow \forall \aset{\DNN_i, \phi_i} \in reduce(\psi) . \DNN_i \models \phi_i$$
\end{theorem}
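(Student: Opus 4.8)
The plan is to reduce the global biconditional to a per-disjunct analysis, invoking the two lemmas on each disjunct and using elementary first-order reasoning to recombine them. First I would unfold $\DNN \models \phi$: since $\phi = \forall x.(\phi_{\mathcal{X}}(x) \to \phi_{\mathcal{Y}}(\DNN(x)))$, its validity is equivalent to the absence of a counterexample, $\neg\exists x.\,\neg\phi$. Line~\ref{algln:negdnf} of Algorithm~\ref{alg:reduction} rewrites $\neg\phi$ in disjunctive normal form, so $\neg\phi \equiv \bigvee_i d_i$ with each disjunct $d_i$ a conjunction of linear inequalities. I would then apply the quantifier law $\neg\exists x.\bigvee_i d_i(x) \equiv \bigwedge_i \neg\exists x.\,d_i(x)$, which already matches the shape of the universally quantified conjunction on the right-hand side; it therefore suffices to prove, for each $i$, that $\neg\exists x.\,d_i(x) \Leftrightarrow (\DNN_i \models \phi_i)$.

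For a fixed disjunct I would split $d_i$ into its input-only conjuncts $\phi_{\mathcal{X},i}$ and output-only conjuncts $\phi_{\mathcal{Y},i}$, mirroring the two extraction steps of the algorithm. Lemma~\ref{lemma:halfspaceconstruction} applied to $\phi_{\mathcal{Y},i}$ yields a halfspace polytope $H_i = (A_i, b_i)$ with $(A_i y \le b_i) \Leftrightarrow (y \models \phi_{\mathcal{Y},i})$, and Lemma~\ref{lemma:suffixconstruction} yields the suffix $S_i$ so that, writing $\DNN_i = S_i \circ \DNN$ as on Line~\ref{algln:constructdnn}, we have $\DNN(x) \in H_i \Leftrightarrow \DNN_i(x)_0 \le \DNN_i(x)_1$. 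Chaining these gives $\DNN(x) \models \phi_{\mathcal{Y},i} \Leftrightarrow \DNN_i(x)_0 \le \DNN_i(x)_1$, and by contraposition $\DNN(x) \not\models \phi_{\mathcal{Y},i} \Leftrightarrow \DNN_i(x)_0 > \DNN_i(x)_1$, which is precisely the post-condition of $\phi_i$ constructed on Line~\ref{algln:constructphi}.

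The per-disjunct equivalence then becomes bookkeeping: $\DNN_i \models \phi_i$ unfolds to $\forall x.(x \in \phi_{\mathcal{X},i} \to \DNN_i(x)_0 > \DNN_i(x)_1)$, which by the displayed equivalence equals $\forall x.(\phi_{\mathcal{X},i}(x) \to \neg\phi_{\mathcal{Y},i}(\DNN(x)))$, that is $\neg\exists x.(\phi_{\mathcal{X},i}(x) \wedge \phi_{\mathcal{Y},i}(\DNN(x))) = \neg\exists x.\,d_i(x)$. Conjoining over $i$ and reversing the quantifier law recovers $\neg\exists x.\,\neg\phi$, which equals $\DNN \models \phi$, closing the biconditional.

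I expect the main obstacle to be justifying the clean separation of each DNF disjunct into purely input and purely output constraints, since Lemmas~\ref{lemma:halfspaceconstruction} and~\ref{lemma:suffixconstruction} speak only about output variables and the suffix is grafted onto $\DNN$'s outputs. I would lean on the stated property form $\phi_{\mathcal{X}}(x) \to \phi_{\mathcal{Y}}(\DNN(x))$, in which no single inequality mixes input and output variables, so that $\neg\phi \equiv \phi_{\mathcal{X}} \wedge \neg\phi_{\mathcal{Y}}$ and every DNF disjunct factors as an input conjunction together with an output conjunction; making this factorization precise, rather than the logical recombination, is where the care lies.
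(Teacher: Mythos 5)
Your proposal is correct and follows essentially the same route as the paper's proof: negate the property, take the DNF, construct one reduced problem per disjunct, and invoke Lemmas~\ref{lemma:halfspaceconstruction} and~\ref{lemma:suffixconstruction} to equate violations of the reduced problem with satisfaction of the disjunct. If anything, your version is more explicit than the paper's sketch --- you carry the biconditional through both directions via the quantifier law and you flag the input/output factorization of each disjunct, which the paper leaves implicit in its \(\mathrm{extract\_input\_constraints}\)/\(\mathrm{extract\_output\_constraints}\) steps.
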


\begin{proof}
	A model that satisfies any disjunct of $DNF(\neg\phi)$ falsifies $\phi$.
	If $\phi$ is falsifiable, then at least one disjunct of $DNF(\neg\phi)$ is satisfiable.

	Algorithm~\ref{alg:reduction} constructs a correctness problem for each disjunct.
	For each disjunct, Algorithm~\ref{alg:reduction} constructs a halfspace polytope, $H$, which is used to construct a suffix network, $\DNN_s$.
	The algorithm then constructs the network $\DNN'(x) = \DNN_s(\DNN(x))$.
	Algorithm~\ref{alg:reduction} pairs each constructed network with the property $\phi = \forall x . x \in [0, 1]^n \rightarrow \DNN'(x)_0 > \DNN'(x)_1$.
	A violation occurs only when $\DNN'(x)_0 \leq \DNN'(x)_1$.
	By Lemmas~\ref{lemma:halfspaceconstruction} and \ref{lemma:suffixconstruction}, we get that $\DNN'(x)_0 \leq \DNN'(x)_1$ if and only if $\DNN'(x) \in H$.
	If $\DNN'(x) \in H$ then $\DNN'(x)$ satisfies the disjunct and is therefore a violation of the original property.
\end{proof}

\subsection{On the Existance of a Bounded Largest Representable Number}
\label{sec:largestrepresentablenumber}

Our proof that property reduction generates a set of robustness problems equivalid to an arbitrary problem relies on the assumption that strict inequalities can be converted to non-strict inequalities.
To do so we rely on the existance of a largest representable number that is less than some given value.
While this is not necessarily true for all sets of numbers (\eg $\mathbb{R}$), it is true for for most numeric representations used in computation (\eg IEEE 754 floating point arithmetic).

\section{Verification Benchmarks}
\label{appendix:benchmarks}

We examine the benchmarks used to evaluate each of the 13 verifiers supported by DNNV,
and determine whether each verifier can run on the benchmark out of the box, and also whether they could be run on the benchmark when DNNV is applied.
Here we provide a short description of each of the 19 verification benchmarks that we have identified.
A short summary of some of the features of each verifier relevant to DNNV are shown in Table~\ref{tab:benchmarks}.
These features include
whether any properties cannot represent their input constraints using hyper-rectangles ($\neg$HR),
whether any network in the benchmark contains convolution operations (C),
whether any network contains residual structures (R),
and whether any network uses any non-ReLU activation functions ($\neg$ReLU).

The ACAS Xu (\textsc{AX}) benchmark, introduced for \reluplex~\cite{katz-etal:CAV:2017:reluplex}, is one of the most used verification benchmarks~\cite{bunel-etal:NIPS:2018:plnn,wang-etal:NIPS:2018:neurify,katz-etal:cav:2019:marabou,bak-etal:cav:2020:nnenum}.
The benchmark consists of 10 properties.
Property $\phi_1$ is a reachability property, specifying an upper bound on one of the 5 output variables.
Properties $\phi_5$, $\phi_6$, $\phi_9$, and $\phi_{10}$ are all traditional class robustness properties, specifying the desired class for the given input region.
Properties $\phi_3$, $\phi_4$, $\phi_7$ and $\phi_8$ are reachability properties, specifying a set of acceptable classes for the input region.
Properties $\phi_2$ is also a reachability property, specifying that a given output value cannot be greater than all others.
Each of the properties are applied to a subset of 45 networks trained on an aircraft collision avoidance dataset, with 5 inputs, 5 output classes and 6 layers of 50 neurons each.
The original benchmark included networks in \reluplex-NNET format, and a custom version of \reluplex was written for each property.
Later uses of the benchmark translated the verification problems into RLV format, which is used by \planet, \bab, and \babsb, as well as translating the networks into \onnx.
The benchmark in \onnx and \dsl format is fully supported by \tool.

The Collision Detection (\textsc{CD}) benchmark~\cite{ehlers:ATVA:2017:planet}, intoduced for the evaluation of \planet, consists of 500 local robustness properties for an 80 neuron network with a fully connected layer and max pooling layer that classifies whether 2 simulated vehicles will collide, given their current state.
The verification problems, in RLV format, are supported by \planet, \bab, and \babsb. The problems have also been modified to convert max pooling operations to a sequence of fully-connected layers with ReLU activations, and then translated to \reluplex-NNET format, enabling off the shelf support by \marabou, and a generalized version of \reluplex.
This benchmark is one of the few that is not supported by \tool, since the network contains structures that are not easily supported by ONNX.
In particular, the max-pooling operation in the original network, applied to a flat tensor, cannot be encoded by \onnx from their original format.

The \planet MNIST (\textsc{PM}) benchmark~\cite{ehlers:ATVA:2017:planet} is a set of 7 properties over a convolutional network trained on the MNIST dataset~\cite{lecun-etal::2010:MNIST}.
The first 4 of these are reachability properties with hyper-rectangle input constraints, while the next 2 are local robustness properties with hyper-rectangle input constraints, and the final property is an local robustness property with halfspace-polytope input constraints.
The original benchmark was provided in RLV format.
The first 6 of these properties are currently supported by \tool, while the final property could be supported by \tool with additional engineering effort.

The TwinStream (\textsc{TS}) benchmark~\cite{bunel-etal:arxiv:2017:plnnv1} consists of 1 property applied to 81 networks that output a constant value.
The property asserts that for all inputs, the output of the network is positive.
The original benchmark was provided in RLV format.
This benchmark is fully supported by \tool for all verifiers.

The PCAMNIST (\textsc{PCA}) benchmark~\cite{bunel-etal:NIPS:2018:plnn} consists of 12 reachability properties applied to 17 networks trained on modified versions of the MNIST dataset to predict the parity of the digit represented by the first $k$ components of the PCA decomposition of an image.
The original benchmark was provided in RLV format.
This benchmark is fully supported by \tool for all verifiers.

\mipverify MNIST (\textsc{MM}) consists of 10000 local robustness properties applied to 5 networks trained on the MNIST dataset.
The networks have varied structures: 2 networks are fully connected and 3 are convolutional.
We could not find an available version of the benchmark used by \mipverify to evaluate its original input format.
This benchmark is fully supported by \tool for all verifiers except \reluplex, which does not support convolution operations.

\mipverify CIFAR (\textsc{MC}) consists of 10000 local robustness properties applied to 2 networks trained on the CIFAR10 dataset~\cite{krizhevsky-hinton::2009:CIFAR}.
One of these networks is a convolutional network and the other is a residual network.
We could not find an available version of the benchmark used by \mipverify to evaluate its original input format.
This benchmark is supported by \tool for verifiers that can support residual connections, including: \planet, \deepzono, \deeppoly, \refinezono, and \refinepoly.
While the benchmark is supported by the version of \mipverify used in its study, it is not supported through \tool, since the publicly available version of \mipverify does not support residual connections.

The \neurify MNIST (\textsc{NM}) benchmark~\cite{wang-etal:NIPS:2018:neurify} consists of 500 $L_\infty$ local robustness properties across 4 MNIST networks, 3 fully connected networks with 58, 110, and 1034 neurons respectively, and a convolutional network with 4814 neurons.
The original benchmark was provided in \neurify-NNET format, with properties hard-coded into the verifier.
\tool enables almost all verifiers to run on this benchmark.
\reluplex cannot be run due to the presence of convolutional layers, which are not supported.
\mipverify cannot be run due to the presence of non-hypercube input constraints. While this limitation of the verifier can be satisfied with a property reduction for fully-connected networks, \tool does not currenly support such a reduction for convolutional networks.

The \neurify Drebin (\textsc{NDb}) benchmark~\cite{wang-etal:NIPS:2018:neurify} consists of 500 $L_{\infty}$ local robustness properties across 3 fully connected Drebin~\cite{arp-etal:NDSS:2014:DREBIN} networks with 102, 212, and 402 neurons each.
The original benchmark was provided in \neurify-NNET format, with properties hard-coded into the verifier.
This benchmark is fully supported by \tool for all verifiers.

The \neurify DAVE (\textsc{NDv}) benchmark~\cite{wang-etal:NIPS:2018:neurify} consists of 200 local reachability properties, with 4 different types of input constraints (50 properties of each type).
The first type of input constraint is an $L_{\infty}$ constraint, which is equivalent to a hyper-rectangle constraint.
The second type of input constraint is an $L_1$ constraint, which can be written as a halfspace polytope constraint.
The third and fourth type of input constraint are image brightning and contrast, which can be written as halfspace polytope constraints.
The properties are applied to a convolutional network for an autonomous vehicle, with 10276 neurons.
The original benchmark was provided in \neurify-NNET format, with properties hard-coded into the verifier.
Similar to the \neurify MNIST benchmark, \tool enables almost all verifiers to run on this benchmark.
\reluplex cannot be run, due to the presence of convolutional layers, which are not supported, and \mipverify cannot be run due to the presence of non-hypercube input constraints.

The \deepzono MNIST (\textsc{DZM}) benchmark~\cite{singh-etal:NIPS:2018:deepzono} consists of 1700 local robustness properties, subsets of which are applied to 10 networks trained on the MNIST dataset.
The networks have varied structures and activation functions: 3 networks are fully connected, 1 of which uses ReLU activations, 1 with Tanh activations, and 1 with Sigmoid activations; 6 are convolutional, 4 of which have ReLU activations, 1 with Tanh activations, and 1 with Sigmoid activations; and 1 is a residual network.
The networks in the original benchmark were provided in a custom human-readable text format, with properties hard-coded into the verifier.
\tool does not increase the support for this benchmark due to the presence of both a residual network and non-ReLU activation functions.

The \deepzono CIFAR10 (\textsc{DZC}) benchmark~\cite{singh-etal:NIPS:2018:deepzono} consists of 1700 local robustness properties, subsets of which are applied to 5 networks trained on the CIFAR10 dataset.
The networks have varied structures and activation functions: 3 networks are fully connected, 1 of which uses ReLU activations, 1 with Tanh activations, and 1 with Sigmoid activations; and 2 are convolutional with ReLU activations.
The networks in the original benchmark were provided in a custom human-readable text format, with properties hard-coded into the verifier.
\tool enables \verinet to run on this benchmark.
Other verifiers are not supported due to the non-ReLU activation functions.

The \deeppoly MNIST (\textsc{DPM}) benchmark~\cite{singh-etal:POPL:2019:deeppoly} consists of 1500 local robustness properties, subsets of which are applied to 8 networks trained on the MNIST dataset.
The networks have varied structures and activation functions: 5 networks are fully connected, 3 of which uses ReLU activations, 1 with Tanh activations, and 1 with Sigmoid activations; and 3 are convolutional with ReLU activations.
The networks in the original benchmark were provided in a custom human-readable text format, with properties hard-coded into the verifier.
\tool enables \verinet to run on this benchmark.
Other verifiers are not supported due to the non-ReLU activation functions.

The \deeppoly CIFAR10 (\textsc{DPC}) benchmark~\cite{singh-etal:POPL:2019:deeppoly} consists of 800 local robustness properties, subsets of which are applied to 5 networks trained on the CIFAR10 dataset.
The networks have varied structures: 3 networks are fully connected with ReLU activations; and 2 are convolutional with ReLU activations.
The networks in the original benchmark were provided in a custom human-readable text format, with properties hard-coded into the verifier.
\tool enables several additional verifiers to support this benchmark.
In particular, it enables most verifiers that can be applied to convolutional networks with relu activations.

The \refinezono MNIST (\textsc{RZM}) benchmark~\cite{singh-etal:ICLR:2019:refinezono} consists of 800 local robustness properties, subsets of which are applied to 8 networks trained on the MNIST dataset.
5 networks are fully connected with ReLU activations and 3 are convolutional with ReLU activations.
The networks in the original benchmark were provided in a custom human-readable text format, with properties hard-coded into the verifier.
\tool enables several additional verifiers to support this benchmark.
In particular, it enables most verifiers that can be applied to convolutional networks with relu activations.

The \refinezono CIFAR10 (\textsc{RZC}) benchmark~\cite{singh-etal:ICLR:2019:refinezono} consists of 200 local robustness properties, subsets of which are applied to 2 networks trained on the CIFAR10 dataset.
One of the networks is fully connected with ReLU activations and the other is convolutional with ReLU activations.
The networks in the original benchmark were provided in a custom human-readable text format, with properties hard-coded into the verifier.
\tool enables several additional verifiers to support this benchmark.
In particular, it enables most verifiers that can be applied to convolutional networks with relu activations.

The \refinepoly MNIST (\textsc{RPM}) benchmark~\cite{singh-etal:NeurIPS:2019:refinepoly} consists of 600 local robustness properties, subsets of which are applied to 6 networks trained on the MNIST dataset.
4 networks are fully connected with ReLU activations and 2 are convolutional with ReLU activations.
The networks in the original benchmark were provided in a custom human-readable text format, with properties hard-coded into the verifier.
\tool enables several additional verifiers to support this benchmark.
In particular, it enables most verifiers that can be applied to convolutional networks with relu activations.

The \refinepoly CIFAR10 (\textsc{RPC}) benchmark~\cite{singh-etal:NeurIPS:2019:refinepoly} consists of 300 local robustness properties, subsets of which are applied to 3 networks trained on the MNIST dataset.
Two of the networks are convolutional with ReLU activations and the third is a residual network with ReLU activations.
The networks in the original benchmark were provided in a custom human-readable text format, with properties hard-coded into the verifier.
\tool enables the \planet verifier to support this benchmark.
In particular, it enables most verifiers that can be applied to convolutional networks with relu activations.
Other verifiers do not support the residual structure of one of the networks.

The \verinet CIFAR10 (\textsc{VC}) benchmark~\cite{henriksen-lomuscio:ECAI:2020:verinet} consists of 250 local robustness properties applied to 1 convolutional network with ReLU activations.
The networks were provided in \onnx format, with hard-coded properties.
\tool enables support of this benchmark by most of the integrated verifiers.
\reluplex does not support convolutional networks, and \mipverify does not support properties with input constraints that are not hyper-cubes.


\begin{thebibliography}{10}
\providecommand{\url}[1]{\texttt{#1}}
\providecommand{\urlprefix}{URL }
\providecommand{\doi}[1]{https://doi.org/#1}

\bibitem{arp-etal:NDSS:2014:DREBIN}
Arp, D., Spreitzenbarth, M., Hubner, M., Gascon, H., Rieck, K.: {DREBIN:}
  effective and explainable detection of android malware in your pocket. In:
  21st Annual Network and Distributed System Security Symposium, {NDSS} 2014,
  San Diego, California, USA, February 23-26, 2014. The Internet Society
  (2014),
  \url{https://www.ndss-symposium.org/ndss2014/drebin-effective-and-explainable-detection-android-malware-your-pocket}

\bibitem{bak-etal:cav:2020:nnenum}
Bak, S., Tran, H.D., Hobbs, K., Johnson, T.T.: Improved geometric path
  enumeration for verifying relu neural networks. In: Lahiri, S.K., Wang, C.
  (eds.) Computer Aided Verification. pp. 66--96. Springer International
  Publishing, Cham (2020)

\bibitem{bastani-etal:NIPS:2016}
Bastani, O., Ioannou, Y., Lampropoulos, L., Vytiniotis, D., Nori, A.V.,
  Criminisi, A.: Measuring neural net robustness with constraints. In: Neural
  Information Processing Systems. pp. 2621--2629. NIPS'16, Curran Associates
  Inc., USA (2016)

\bibitem{bojarski-etal:NIPS-DLS:2016:end-to-end-driving}
Bojarski, M., Testa, D.D., Dworakowski, D., Firner, B., Flepp, B., Goyal, P.,
  Jackel, L.D., Monfort, M., Muller, U., Zhang, J., Zhang, X., Zhao, J., Zieba,
  K.: End to end learning for self-driving cars. In: NIPS 2016 Deep Learning
  Symposium (2016)

\bibitem{boopathy-etal:AAAI:2019:cnncert}
Boopathy, A., Weng, T.W., Chen, P.Y., Liu, S., Daniel, L.: Cnn-cert: An
  efficient framework for certifying robustness of convolutional neural
  networks. In: AAAI (Jan 2019)

\bibitem{bunel-etal:arxiv:2017:plnnv1}
Bunel, R., Turkaslan, I., Torr, P.H.S., Kohli, P., Kumar, M.P.: Piecewise
  linear neural network verification: {A} comparative study. CoRR
  \textbf{abs/1711.00455v1} (2017), \url{http://arxiv.org/abs/1711.00455v1}

\bibitem{bunel-etal:NIPS:2018:plnn}
Bunel, R.R., Turkaslan, I., Torr, P.H.S., Kohli, P., Mudigonda, P.K.: A unified
  view of piecewise linear neural network verification. In: NeurIPS. pp.
  4795--4804 (2018)

\bibitem{codevilla-etal:ICRA:2018:driving}
Codevilla, F., Miiller, M., López, A., Koltun, V., Dosovitskiy, A.: End-to-end
  driving via conditional imitation learning. In: 2018 IEEE International
  Conference on Robotics and Automation (ICRA). pp.~1--9 (May 2018).
  \doi{10.1109/ICRA.2018.8460487}

\bibitem{dutta:NFM:2018:sherlock}
Dutta, S., Jha, S., Sankaranarayanan, S., Tiwari, A.: Output range analysis for
  deep feedforward neural networks. In: {NFM}. Lecture Notes in Computer
  Science, vol. 10811, pp. 121--138. Springer (2018)

\bibitem{dvijotham-etal:UAI:2018}
Dvijotham, K., Stanforth, R., Gowal, S., Mann, T., Kohli, P.: A dual approach
  to scalable verification of deep networks. In: Conference on Uncertainty in
  Artificial Intelligence (UAI-18). pp. 162--171. AUAI Press, Corvallis, Oregon
  (2018)

\bibitem{ehlers:ATVA:2017:planet}
Ehlers, R.: Formal verification of piece-wise linear feed-forward neural
  networks. In: Automated Technology for Verification and Analysis - 15th
  International Symposium, {ATVA} 2017, Pune, India, October 3-6, 2017,
  Proceedings. pp. 269--286 (2017). \doi{10.1007/978-3-319-68167-2\_19}

\bibitem{gehr-etal:SP:2018:ai2}
{Gehr}, T., {Mirman}, M., {Drachsler-Cohen}, D., {Tsankov}, P., {Chaudhuri},
  S., {Vechev}, M.: Ai2: Safety and robustness certification of neural networks
  with abstract interpretation. In: 2018 IEEE Symposium on Security and Privacy
  (SP). pp. 3--18 (May 2018). \doi{10.1109/SP.2018.00058}

\bibitem{Goodfellow-et-al-2016}
Goodfellow, I., Bengio, Y., Courville, A.: Deep Learning. MIT Press (2016),
  \url{http://www.deeplearningbook.org}

\bibitem{VNNLIB}
Guidotti, D., Barrett, C., Katz, G., Pulina, L., Narodytska, N., Tacchella, A.:
  {The Verification of Neural Networks Library (VNN-LIB)}. {\tt www.vnnlib.org}
  (2019)

\bibitem{henriksen-lomuscio:ECAI:2020:verinet}
Henriksen, P., Lomuscio, A.: Efficient neural network verification via adaptive
  refinement and adversarial search. In: Giacomo, G.D., Catal{\'{a}}, A.,
  Dilkina, B., Milano, M., Barro, S., Bugar{\'{\i}}n, A., Lang, J. (eds.)
  {ECAI} 2020 - 24th European Conference on Artificial Intelligence, 29
  August-8 September 2020, Santiago de Compostela, Spain. Frontiers in
  Artificial Intelligence and Applications, vol.~325, pp. 2513--2520. {IOS}
  Press (2020). \doi{10.3233/FAIA200385}

\bibitem{huang-etal:CAV:2017:dlv}
Huang, X., Kwiatkowska, M., Wang, S., Wu, M.: Safety verification of deep
  neural networks. In: {CAV} {(1)}. Lecture Notes in Computer Science, vol.
  10426, pp. 3--29. Springer (2017)

\bibitem{hudak:1998:dsl}
Hudak, P.: Modular domain specific languages and tools. In: Proceedings. Fifth
  International Conference on Software Reuse. pp. 134--142. IEEE (1998)

\bibitem{katz-etal:CAV:2017:reluplex}
Katz, G., Barrett, C.W., Dill, D.L., Julian, K., Kochenderfer, M.J.: Reluplex:
  An efficient {SMT} solver for verifying deep neural networks. In: Computer
  Aided Verification {CAV} 2017, Heidelberg, Germany, July 24-28, 2017,
  Proceedings, Part {I}. pp. 97--117 (2017). \doi{10.1007/978-3-319-63387-9_5}

\bibitem{katz-etal:cav:2019:marabou}
Katz, G., Huang, D.A., Ibeling, D., Julian, K., Lazarus, C., Lim, R., Shah, P.,
  Thakoor, S., Wu, H., Zelji{\'c}, A., et~al.: The {Marabou} framework for
  verification and analysis of deep neural networks. In: International
  Conference on Computer Aided Verification. pp. 443--452. Springer (2019)

\bibitem{krizhevsky-hinton::2009:CIFAR}
Krizhevsky, A., Hinton, G.: Learning multiple layers of features from tiny
  images  (2009)

\bibitem{lecun-etal::2010:MNIST}
LeCun, Y., Cortes, C., Burges, C.J.: The mnist database of handwritten digits

\bibitem{liu-etal:arxiv:2019:neuralverification}
Liu, C., Arnon, T., Lazarus, C., Barrett, C., Kochenderfer, M.J.: Algorithms
  for verifying deep neural networks. CoRR  \textbf{abs/1903.06758} (2019)

\bibitem{onnx}
ONNX: {Open Neural Network Exchange}. \url{https://github.com/onnx/onnx} (2017)

\bibitem{SOCRATES}
Pham, L.H., Li, J., Sun, J.: {SOCRATES:} towards a unified platform for neural
  network verification. CoRR  \textbf{abs/2007.11206} (2020),
  \url{https://arxiv.org/abs/2007.11206}

\bibitem{raghunathan-etal:ICLR:2018}
Raghunathan, A., Steinhardt, J., Liang, P.: Certified defenses against
  adversarial examples. In: {ICLR}. OpenReview.net (2018)

\bibitem{ruan-etal:IJCAI:2018:deepgo}
Ruan, W., Huang, X., Kwiatkowska, M.: Reachability analysis of deep neural
  networks with provable guarantees. In: {IJCAI}. pp. 2651--2659. ijcai.org
  (2018)

\bibitem{shriver-etal:arxiv:2021:dnnv}
Shriver, D., Elbaum, S., Dwyer, M.B.: Dnnv: A framework for deep neural network
  verification. CoRR  \textbf{abs/XXXX.XXXXX} (2021),
  \url{http://arxiv.org/abs/XXXX.XXXXX}

\bibitem{singh-etal:NeurIPS:2019:refinepoly}
Singh, G., Ganvir, R., P{\"{u}}schel, M., Vechev, M.T.: Beyond the single
  neuron convex barrier for neural network certification. In: Wallach, H.M.,
  Larochelle, H., Beygelzimer, A., d'Alch{\'{e}}{-}Buc, F., Fox, E.B., Garnett,
  R. (eds.) Advances in Neural Information Processing Systems 32: NeurIPS 2019,
  8-14 December 2019, Vancouver, BC, Canada. pp. 15072--15083 (2019),
  \url{http://papers.nips.cc/paper/9646-beyond-the-single-neuron-convex-barrier-for-neural-network-certification}

\bibitem{singh-etal:NIPS:2018:deepzono}
Singh, G., Gehr, T., Mirman, M., {P\"{u}schel}, M., Vechev, M.: Fast and
  effective robustness certification. In: Bengio, S., Wallach, H., Larochelle,
  H., Grauman, K., Cesa-Bianchi, N., Garnett, R. (eds.) Advances in Neural
  Information Processing Systems 31, pp. 10802--10813. Curran Associates, Inc.
  (2018),
  \url{http://papers.nips.cc/paper/8278-fast-and-effective-robustness-certification.pdf}

\bibitem{singh-etal:POPL:2019:deeppoly}
Singh, G., Gehr, T., P{\"{u}}schel, M., Vechev, M.T.: An abstract domain for
  certifying neural networks. {PACMPL}  \textbf{3}({POPL}),  41:1--41:30 (2019)

\bibitem{singh-etal:ICLR:2019:refinezono}
Singh, G., Gehr, T., P{\"{u}}schel, M., Vechev, M.T.: Boosting robustness
  certification of neural networks. In: 7th International Conference on
  Learning Representations, {ICLR} 2019, New Orleans, LA, USA, May 6-9, 2019.
  OpenReview.net (2019), \url{https://openreview.net/forum?id=HJgeEh09KQ}

\bibitem{szegedy-etal:iclr:2014:lbfgs}
Szegedy, C., Zaremba, W., Sutskever, I., Bruna, J., Erhan, D., Goodfellow,
  I.J., Fergus, R.: Intriguing properties of neural networks. In: Bengio, Y.,
  LeCun, Y. (eds.) 2nd International Conference on Learning Representations,
  {ICLR} 2014, Banff, AB, Canada, April 14-16, 2014, Conference Track
  Proceedings (2014)

\bibitem{tjeng-etal:ICLR:2019:mipverify}
Tjeng, V., Xiao, K.Y., Tedrake, R.: Evaluating robustness of neural networks
  with mixed integer programming. In: International Conference on Learning
  Representations (2019), \url{https://openreview.net/forum?id=HyGIdiRqtm}

\bibitem{wang-etal:NIPS:2018:neurify}
Wang, S., Pei, K., Whitehouse, J., Yang, J., Jana, S.: Efficient formal safety
  analysis of neural networks. In: NeurIPS. pp. 6369--6379 (2018)

\bibitem{wang-etal:USS:2018:reluval}
Wang, S., Pei, K., Whitehouse, J., Yang, J., Jana, S.: Formal security analysis
  of neural networks using symbolic intervals. In: {USENIX} Security Symposium.
  pp. 1599--1614. {USENIX} Association (2018)

\bibitem{weng-etal:ICML:2018}
Weng, T., Zhang, H., Chen, H., Song, Z., Hsieh, C., Daniel, L., Boning, D.S.,
  Dhillon, I.S.: Towards fast computation of certified robustness for relu
  networks. In: {ICML}. Proceedings of Machine Learning Research, vol.~80, pp.
  5273--5282. {PMLR} (2018)

\bibitem{wong-kolter:ICML:2018}
Wong, E., Kolter, J.Z.: Provable defenses against adversarial examples via the
  convex outer adversarial polytope. In: {ICML}. Proceedings of Machine
  Learning Research, vol.~80, pp. 5283--5292. {PMLR} (2018)

\bibitem{xiang-etal:NNLS:2018}
{Xiang}, W., {Tran}, H., {Johnson}, T.T.: Output reachable set estimation and
  verification for multilayer neural networks. IEEE Transactions on Neural
  Networks and Learning Systems  \textbf{29}(11),  5777--5783 (Nov 2018).
  \doi{10.1109/TNNLS.2018.2808470}

\bibitem{yuan-etal:TNNLS:2019:aesurvey}
{Yuan}, X., {He}, P., {Zhu}, Q., {Li}, X.: Adversarial examples: Attacks and
  defenses for deep learning. IEEE Transactions on Neural Networks and Learning
  Systems  \textbf{30}(9),  2805--2824 (2019)

\bibitem{zhang-etal:NeurIPS:2018:crown}
Zhang, H., Weng, T., Chen, P., Hsieh, C., Daniel, L.: Efficient neural network
  robustness certification with general activation functions. In: Advances in
  Neural Information Processing Systems 31, NeurIPS 2018, 3-8 December 2018,
  Montr{\'{e}}al, Canada. pp. 4944--4953 (2018)

\end{thebibliography}
\end{document}